\documentclass[journal,10pt,twoside]{IEEEtran}

\usepackage[
  a4paper,
  left=18mm,
  right=18mm,
  top=24mm,
  bottom=22mm,
  columnsep=6mm
]{geometry}

\usepackage[T1]{fontenc}
\usepackage[utf8]{inputenc}

\usepackage{amsmath,amssymb,amsfonts,amsthm,mathtools}

\usepackage{newtxtext}
\usepackage{newtxmath}

\usepackage{bm}

\newtheorem{definition}{Definition}
\newtheorem{proposition}[definition]{Proposition}
\newtheorem{theorem}[definition]{Theorem}
\newtheorem{corollary}[definition]{Corollary}

\theoremstyle{remark}
\newtheorem{remark}[definition]{Remark}

\usepackage{graphicx}
\graphicspath{{figures/}}

\usepackage{booktabs}
\usepackage{array}

\usepackage[font=footnotesize]{caption}
\usepackage[font=footnotesize]{subcaption}

\usepackage{xcolor}
\usepackage{tikz}
\usepackage{pgfplots}

\pgfplotsset{compat=1.18}

\usetikzlibrary{
  arrows.meta,
  calc,
  positioning,
  decorations.pathreplacing
}

\definecolor{fiberpositive}{RGB}{31,119,180}
\definecolor{fibernegative}{RGB}{214,85,44}
\definecolor{fiberzero}{RGB}{55,55,55}
\definecolor{fiberboundary}{RGB}{95,95,95}
\definecolor{fiberhighlight}{RGB}{112,48,160}
\definecolor{fiberregion}{RGB}{236,242,247}

\usepackage{cite}
\usepackage[hidelinks]{hyperref}
\usepackage[nameinlink,noabbrev]{cleveref}

\crefformat{equation}{#2(#1)#3}
\Crefformat{equation}{Equation~#2(#1)#3}

\crefname{figure}{Fig.}{Figs.}
\Crefname{figure}{Figure}{Figures}

\crefname{table}{Table}{Tables}
\Crefname{table}{Table}{Tables}

\crefname{section}{Sec.}{Secs.}
\Crefname{section}{Section}{Sections}

\crefname{subsection}{Sec.}{Secs.}
\Crefname{subsection}{Section}{Sections}

\crefname{definition}{Definition}{Definitions}
\Crefname{definition}{Definition}{Definitions}

\crefname{proposition}{Proposition}{Propositions}
\Crefname{proposition}{Proposition}{Propositions}

\crefname{theorem}{Theorem}{Theorems}
\Crefname{theorem}{Theorem}{Theorems}

\crefname{corollary}{Corollary}{Corollaries}
\Crefname{corollary}{Corollary}{Corollaries}

\crefname{remark}{Remark}{Remarks}
\Crefname{remark}{Remark}{Remarks}

\usepackage{enumitem}

\setlist[enumerate]{
  leftmargin=*,
  itemsep=0.3ex,
  topsep=0.5ex,
  parsep=0pt
}

\usepackage{titlesec}

\renewcommand{\thesection}{\arabic{section}}
\renewcommand{\thesubsection}{\thesection.\arabic{subsection}}
\renewcommand{\thesubsubsection}{%
  \thesubsection.\arabic{subsubsection}%
}

\titleformat{\section}
  {\normalfont\large\bfseries}
  {\thesection}
  {0.65em}
  {}

\titleformat{\subsection}
  {\normalfont\normalsize\bfseries}
  {\thesubsection}
  {0.65em}
  {}

\titleformat{\subsubsection}
  {\normalfont\normalsize\itshape}
  {\thesubsubsection}
  {0.65em}
  {}

\titlespacing*{\section}
  {0pt}
  {2.0ex plus 0.5ex minus 0.2ex}
  {0.8ex}

\titlespacing*{\subsection}
  {0pt}
  {1.5ex plus 0.4ex minus 0.2ex}
  {0.5ex}

\titlespacing*{\subsubsection}
  {0pt}
  {1.2ex plus 0.3ex minus 0.2ex}
  {0.4ex}

\newcommand{\R}{\mathbb{R}}
\newcommand{\dd}{\mathrm{d}}
\newcommand{\T}{\mathsf{T}}

\newcommand{\diag}{\operatorname{diag}}
\newcommand{\rank}{\operatorname{rank}}
\newcommand{\argmax}{\operatorname*{arg\,max}}
\newcommand{\Vol}{\operatorname{Vol}}

\newcommand{\Vspace}{\mathcal V}
\newcommand{\Wspace}{\mathcal W}
\newcommand{\Fibre}{\mathcal F}
\newcommand{\Ereg}{\mathcal E_{\mathrm{reg}}}

\newcommand{\papertitle}{%
Drag-Aware Aerodynamic Manipulability for Torque-Limited Redundant
Multirotors: Aerodynamic Promptness based on the Symmetric
Acceleration Capacity%
}

\newcommand{\paperauthor}{Antonio Franchi}

  \title{\papertitle}
  \author{\paperauthor}
  \hypersetup{
    pdftitle={\papertitle},
    pdfauthor={Antonio Franchi}
  }

\begin{document}

\pagestyle{plain}

\makeatletter
\twocolumn[{
\begin{@twocolumnfalse}

\vspace*{-1.5em}

{\raggedright
\fontsize{17}{20}\selectfont
\bfseries
\papertitle
\par}

\vspace{0.9em}

{\raggedright
\normalsize
\textbf{Antonio Franchi}\textsuperscript{1,2}
\par}

\vspace{0.35em}

{\raggedright
\small
\textsuperscript{1}Robotics and Mechatronics,
Faculty of Electrical Engineering, Mathematics and Computer Science,
University of Twente, Enschede, The Netherlands
\par

\textsuperscript{2}Department of Computer, Control and Management Engineering,
Sapienza University of Rome, Rome, Italy
\par

\textit{Corresponding author:}
\href{mailto:schol@r-franchi.eu}{schol@r-franchi.eu}
\par}

\vspace{1.1em}

\noindent
{\small\bfseries Abstract}
\par
\vspace{0.25em}

{\small
\noindent
Aerodynamic promptness quantifies how rotor-speed variations generate
multirotor wrench variations, but its Euclidean formulation assigns the same
local cost to a given rotor acceleration at every operating speed. This work
develops a capacity-aware extension for redundant multirotors with arbitrary
numbers of heterogeneous rotors and wrench components. Under bounded motor
torque and aerodynamic drag, each generally asymmetric instantaneous
rotor-acceleration interval contains a largest zero-centered subset whose
radius defines the symmetric acceleration capacity (SAC). The SAC induces a
Riemannian metric on the positive-capacity rotor-speed region. Propagating its
co-metric through the nonlinear rotor-speed-to-wrench differential yields a
state-attached task-rate capability matrix and ellipsoid. The corresponding
inverse quadratic form equals the minimum normalized rotor-acceleration effort
required to realize a prescribed wrench rate, while the ellipsoid volume
defines the drag-aware aerodynamic manipulability (DAAM) index. Fiberwise DAAM
maximization provides a task-coordinate-invariant criterion for selecting
task-equivalent actuator states; its maximizing set exists on compact regular
domains and can be nonconvex and set valued. Low-dimensional two- and
three-rotor studies make the resulting fiberwise geometry and parameter
dependence directly visible. A complementary two-rotor use case shows how DAAM
can inform a continuous allocation section subject to directional motor-torque
feasibility. For two heterogeneous propulsion systems, the resulting sections
reduce saturation-induced force-tracking degradation relative to the
pseudoinverse in the faster command bands.
\par}

\vspace{0.8em}

{\small
\noindent
\textbf{Keywords:}
Actuator constraints; aerodynamic promptness; control allocation;
manipulability; multirotor systems; redundant actuation.
\par}

\vspace{1.6em}

\end{@twocolumnfalse}
}]
\makeatother

\section{Introduction}
\label{sec:introduction}

Actuation redundancy is increasingly used in multirotor aerial vehicles to enlarge wrench-generation capabilities, improve fault tolerance, and support physical interaction. Tiltable-rotor and fixed-tilt platforms demonstrate how propulsion geometry can provide full or redundant wrench actuation \cite{Ryll2015TCST,Rajappa2015ICRA}. Omnidirectional designs further exploit this freedom to generate forces and moments independently \cite{Brescianini2016ICRA,2018e-TogFra}. Related platforms include tiltable-propeller hexarotors and aerial-manipulation systems with omnidirectional wrench authority \cite{kamel2018voliro,park2018_tmech_odar}. The relation between multirotor architecture and achievable task-space authority has also been studied systematically \cite{Hamandi2021IJRR,Michieletto2018TRO}.

For these vehicles, multiple actuator choices can generate the same commanded wrench. Control allocation resolves this freedom while accounting for actuator limits and secondary objectives. Early formulations include direct allocation and constrained optimization \cite{Enns1998GNC,Durham1993JGCD}. Subsequent studies compared numerical allocation methods and organized the broader field within a common framework \cite{Bodson2002JGCD,johansen2013control}. In redundantly actuated multirotors, null-space and optimization-based methods have been used to manage actuator constraints, aerodynamic interactions, and power consumption \cite{NullspaceUAV2021,SuIROS2022}. Energy-optimal allocation has also been demonstrated for an omnidirectional aerial vehicle~\cite{dyer2019energy}.

Allocation quality is not determined only by the current wrench or steady-state energy consumption. The selected rotor-speed state also determines how effectively subsequent rotor-speed variations can change the wrench, which is relevant during demanding trajectories, disturbances, and actuator saturation. Time-optimal planning and nonlinear control have enabled increasingly dynamic quadrotor trajectories \cite{foehn2021_scirobotics_timeopt,Sun2022TROComparative}. Adaptive predictive control has addressed accurate tracking under payload and model variations \cite{Hanover2021RAL_L1NMPC}. Robust tracking under wind and parameter uncertainty has been studied through intelligent active-force control \cite{AbdelmaksoudMailah2024IJDC}, while learning-based methods have addressed nonlinear quadrotor control \cite{Norouzi2026IJDC}. Disturbance-rejection methods have also been developed for precision UAV altitude control \cite{Choe2026IJDC}. Distinct actuator states generating the same wrench can therefore provide different local capabilities for producing subsequent wrench variations.

The local wrench-rate capability of an allocation depends on both the nonlinear propulsion map and the rotor dynamics. Propeller aerodynamics determine how rotor-speed variations affect thrust \cite{Bristeau2009ECC}, while enhanced actuator models improve the prediction of multirotor dynamics and constraints \cite{Bicego2020JIRS}. Static wrench sets and wrench polytopes characterize whether a task wrench is feasible under given actuator bounds \cite{Orsolino2017AWP,FeasibleWrenchSet2022}. The present question is complementary: how does this local capability vary among task-equivalent rotor-speed states?

Manipulability provides a geometric language for local task capability. Classical manipulability maps a bounded set of joint velocities to an ellipsoid of task velocities \cite{Yoshikawa1985IJRR}, while dynamic manipulability incorporates actuator effort and system dynamics into an acceleration ellipsoid \cite{YoshikawaDyn1985}. Differential-geometric formulations have clarified the intrinsic and coordinate-dependent aspects of such measures \cite{ParkBrockett1994IJRR,Staffetti2002Invariance}. Manipulability has also been extended to mechanisms with more general actuation structures \cite{ParkKim1998JMD}. These studies show that a task differential becomes a physical capability measure only after specifying the admissible actuator-side resource.

Previous work applied this principle to multirotor propulsion through \emph{aerodynamic promptness} \cite{Franchi2026ICUAS}. That formulation propagated Euclidean-bounded rotor-speed variations through the nonlinear rotor-speed-to-wrench map and showed that task-preserving internal states can trade energy consumption against active wrench responsiveness. In antagonistic propulsion, aerodynamic co-contraction can increase promptness without changing the commanded wrench. Complementary work showed that the same internal actuation mechanism can modify passive incremental aerodynamic damping at a trim \cite{Franchi2026RAL}.

The Euclidean promptness model nevertheless assigns the same normalized cost to a given rotor acceleration at every operating speed. This is inconsistent with a torque-limited rotor subject to aerodynamic drag. At low rotor speed, the signed-quadratic thrust map has a small differential. At high rotor speed, drag consumes an increasing fraction of the available motor torque and reduces the acceleration authority remaining in both directions. Local wrench-rate capability is therefore limited by different mechanisms at the two ends of the usable speed range.

To address this limitation, this work develops a capacity-aware extension of aerodynamic promptness for general redundant multirotor actuation. The formulation allows an arbitrary number of rotors with different torque limits, drag coefficients, inertias, and fixed contributions to the generated wrench. Starting from the bounded-torque rotor dynamics, we extract the largest zero-centered subset of each generally asymmetric rotor-acceleration interval. Its radius defines the \emph{symmetric acceleration capacity} (SAC), namely the acceleration magnitude guaranteed in either direction. These state-dependent capacities induce an actuator metric, whose co-metric is propagated through the nonlinear actuation differential to obtain the \emph{drag-aware aerodynamic manipulability} (DAAM) objective and its state-attached task-rate capability ellipsoid.

The resulting geometry supports both local wrench-rate generation and redundancy resolution. At a fixed actuator state, the capacity-horizontal lift realizes a prescribed wrench rate with minimum normalized actuator effort. Along an allocation fiber, the DAAM volume instead compares finite task-equivalent actuator states according to their local bidirectional wrench-rate capabilities. We establish existence of fiberwise maximizers on compact regular domains, prove that the maximizing set is invariant under smooth changes of task coordinates, and derive first-order conditions for interior candidates. The analysis also exposes why the selection problem can be nonconvex and set valued.

The theory is illustrated and assessed through two complementary low-dimensional studies. First, two- and three-rotor examples make the actuator metric, allocation fibers, DAAM landscapes, and fiberwise maximizing sets directly visible. These cases reveal how heterogeneous propulsion parameters reshape the capability landscape and how multiple branches, sign-chamber changes, and boundary encounters can arise. Their low dimension is used deliberately to expose geometric features that would be difficult to visualize in a general multirotor actuator space.

Second, a two-rotor scalar-task case demonstrates how the DAAM field and the SAC metric can inform control allocation. A continuous section is constructed by favoring high DAAM while enforcing directional motor-torque feasibility over a prescribed force-rate envelope. Simulations with two heterogeneous propulsion systems show how this capability-aware allocation can reduce saturation-induced force-tracking degradation relative to the pseudoinverse in the faster command bands. This study provides one transparent use case of the general theory rather than a general allocator for arbitrary multirotor systems.

The principal contributions of this work are therefore:
\begin{enumerate}
\item the SAC for heterogeneous torque-limited multirotors and the actuator-capacity geometry that it induces, derived from the bidirectional acceleration authority remaining under aerodynamic drag;

\item the associated task-rate capability matrix and ellipsoid, with an exact minimum-normalized-effort interpretation, and the resulting DAAM index as their volume measure;

\item a fiberwise operating-point selection theory based on DAAM maximization, comprising existence, task-coordinate invariance, interior optimality conditions, and the characterization of nonconvex and potentially set-valued maximizing sets;

\item numerical studies that make the fiberwise geometry and its parameter dependence visible and demonstrate, in a directly interpretable low-dimensional case, how DAAM can inform the construction of a continuous allocation section.
\end{enumerate}

The theoretical construction in the first three contributions applies to arbitrary numbers of rotors and wrench components, with heterogeneous torque limits, drag coefficients, inertias, and fixed contributions to the generated wrench. This work thereby establishes a general foundation for developing capability-aware allocators tailored to particular multirotor architectures, task dimensions, operating domains, and actuator constraints.

The scope remains model based and foundational for subsequent control-allocation developments. The centered metric deliberately omits the additional acceleration available in the easier direction, while the continuous-section demonstration concerns a scalar task with two cooperative rotors. The simulations therefore provide an allocation-level use case under the considered rotor model, rather than a general allocation algorithm or evidence of improved vehicle-level closed-loop performance.

\section{Prior Foundation and Problem Formulation}
\label{sec:foundation}

Consider a multirotor aerial vehicle equipped with \(n\) fixed propellers and redundantly actuated in an \(m\)-dimensional body-wrench space, with \(m<n\). The inequality \(m<n\) provides actuation redundancy: multiple rotor-speed states can generate the same body wrench. The propellers may have different positions, orientations, rotation directions, and aerodynamic coefficients. Their fixed arrangement determines how the individual thrust forces and drag moments combine into the net body wrench, as formalized by general multirotor actuation models and classifications \cite{Michieletto2018TRO,Hamandi2021IJRR}.

Let \(v=(v_1,\ldots,v_n)\in\Vspace\subset\R^n\) collect the rotor-speed coordinates. For a fixed-pitch reversible propeller, the signed thrust magnitude and the associated aerodynamic drag moment are proportional to \(v_i|v_i|\). Because the propeller position, axis, and rotation direction are fixed in the body frame, the complete wrench contribution of rotor \(i\) is therefore \(a_i v_i|v_i|\), where \(a_i\in\R^m\) is its fixed wrench coefficient. Collecting these coefficients as the columns of \(A\in\R^{m\times n}\) yields the signed-quadratic actuation map
\begin{equation}
 f:\Vspace\to\Wspace,
 \qquad
 w=f(v)=A\bigl(v\odot |v|\bigr),
 \label{eq:task-map}
\end{equation}
where \(w\in\Wspace\subseteq\R^m\) collects the selected body-wrench coordinates and \(\odot\) denotes entrywise multiplication. The subsequent analysis depends only on the map \(f\) and on the individual rotor dynamics; the particular propeller arrangement is entirely represented by the columns \(a_i\) of \(A\).

Since the scalar function \(s\mapsto s|s|\) is continuously differentiable also at \(s=0\), the map in \eqref{eq:task-map} is continuously differentiable, with Jacobian
\begin{equation}
 J_f(v)=2A\diag(|v_1|,\ldots,|v_n|).
 \label{eq:task-jacobian}
\end{equation}
Define the regular region
\begin{equation}
 \Vspace_{\mathrm{reg}}
 =\{v\in\Vspace:\rank J_f(v)=m\}.
 \label{eq:regular-region}
\end{equation}
For a regular task value $w$, the corresponding allocation fiber is
\begin{equation}
 \Fibre_w=f^{-1}(w)\cap\Vspace_{\mathrm{reg}}.
 \label{eq:fiber}
\end{equation}
Its tangent space at $v\in\Fibre_w$ is $T_v\Fibre_w=\ker J_f(v)$.

Under a Euclidean rotor-rate metric, the local task-rate matrix is
\begin{equation}
 M_E(v)=J_f(v)J_f(v)^\T.
 \label{eq:euclidean-promptness}
\end{equation}
The associated volume proxy, $\sqrt{\det M_E(v)}$, is the Euclidean aerodynamic promptness introduced in \cite{Franchi2026ICUAS}. Equation \eqref{eq:euclidean-promptness} corresponds to treating the Euclidean unit ball in rotor-rate space as the admissible local resource. The problem considered here is to replace that uniform resource model with the bidirectional acceleration authority guaranteed in the less favorable direction at the actual rotor-speed state.

\section{Drag-Limited Symmetric Acceleration Geometry}
\label{sec:sac}

\subsection{Directional Rotor-Acceleration Interval}

Consider the first-order dynamics of rotor $i$,
\begin{equation}
 m_i\dot v_i=-b_iv_i|v_i|+\tau_i,
 \qquad
 \tau_i\in[-\bar\tau_i,\bar\tau_i],
 \label{eq:rotor-dynamics}
\end{equation}
where $m_i>0$ is the lumped rotational inertia, $b_i>0$ is the aerodynamic drag coefficient, and $\bar\tau_i>0$ is the torque bound. At a fixed $v_i$, the complete instantaneous acceleration interval is
\begin{equation}
 \mathcal A_i(v_i)
 =
 \left[
 \frac{-\bar\tau_i-b_iv_i|v_i|}{m_i},
 \frac{\bar\tau_i-b_iv_i|v_i|}{m_i}
 \right].
 \label{eq:directional-interval}
\end{equation}
This interval is generally not centered at zero. For $v_i>0$, negative acceleration benefits from both negative motor torque and aerodynamic drag, whereas positive acceleration must overcome drag. The roles exchange for $v_i<0$.

\begin{definition}[Symmetric acceleration capacity]
The symmetric acceleration capacity (SAC) at \(v_i\) is the largest \(r\geq 0\) such that
\[
    [-r,r]\subseteq\mathcal A_i(v_i).
\]
\end{definition}

Figure~\ref{fig:single-rotor-sac} illustrates the resulting symmetric
acceleration capacity for representative torque--drag rotor parameters.
In each case, the capacity is maximal at zero rotor speed and decreases
with \(\lvert v_i\rvert\) until it vanishes at the corresponding
positive-capacity boundary.

\begin{figure}[t]
    \centering
    \begin{tikzpicture}
        \begin{axis}[
            width=\linewidth,
            height=6cm,
            xlabel={Rotor speed \(v_i\)},
            ylabel={SAC \(\alpha_i(v_i)\)},
            domain=-12:12,
            samples=100,
            axis lines=center,
            grid=major,
            legend style={
                at={(0.15,1)},
                anchor=north east,
                nodes={align=left},
                font=\footnotesize
            },
            ymin=0,
            ymax=12,
            xmin=-13,
            xmax=13
        ]
            \addplot[blue, thick]
                {max(0,(10-0.1*x^2)/1)};
            \addlegendentry{
                \(m_i=1\),\\
                \(\bar{\tau}_i=10\),\\
                \(b_i=0.1\)
            }

            \addplot[red, thick, dashed]
                {max(0,(10-0.2*x^2)/1)};
            \addlegendentry{
                \(m_i=1\),\\
                \(\bar{\tau}_i=10\),\\
                \(b_i=0.2\)
            }

            \addplot[green!70!black, thick, dotted]
                {max(0,(10-0.1*x^2)/2)};
            \addlegendentry{
                \(m_i=2\),\\
                \(\bar{\tau}_i=10\),\\
                \(b_i=0.1\)
            }
        \end{axis}
    \end{tikzpicture}
    \caption{Symmetric acceleration capacity \(\alpha_i(v_i)\) for
    representative torque--drag rotor parameters. Each curve reaches
    \(\bar{\tau}_i/m_i\) at zero rotor speed and vanishes at
    \(\lvert v_i\rvert=\sqrt{\bar{\tau}_i/b_i}\). The complete directional
    acceleration interval is generally asymmetric, whereas
    \(\alpha_i(v_i)\) is the radius of its largest zero-centered subset.}
    \label{fig:single-rotor-sac}
\end{figure}

\begin{proposition}
For the rotor model \eqref{eq:rotor-dynamics}, the symmetric acceleration capacity is
\begin{equation}
 \alpha_i(v_i)
 =
 \frac{\bar\tau_i-b_iv_i^2}{m_i}
 \label{eq:sac}
\end{equation}
whenever $|v_i|<v_{i,\max}$, where
\begin{equation}
 v_{i,\max}=\sqrt{\frac{\bar\tau_i}{b_i}}.
 \label{eq:vmax}
\end{equation}
\end{proposition}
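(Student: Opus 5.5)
The proof is a direct computation on the interval \eqref{eq:directional-interval}. Write its endpoints as $\ell_i(v_i)=(-\bar\tau_i-b_iv_i|v_i|)/m_i$ and $u_i(v_i)=(\bar\tau_i-b_iv_i|v_i|)/m_i$. The containment $[-r,r]\subseteq[\ell_i,u_i]$ for $r\ge 0$ holds exactly when $-r\ge \ell_i$ and $r\le u_i$, that is, when $r\le \min\{-\ell_i,\,u_i\}$. Hence the SAC equals $\max\{0,\min\{-\ell_i,u_i\}\}$. This only requires $\min\{-\ell_i,u_i\}\ge 0$, which amounts to $0\in\mathcal A_i(v_i)$; otherwise no $r\ge 0$ works except possibly none. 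The plan is therefore to compute this minimum in closed form and then check its sign on the stated range.

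We have
\[
-\ell_i=\frac{\bar\tau_i+b_iv_i|v_i|}{m_i},\qquad u_i=\frac{\bar\tau_i-b_iv_i|v_i|}{m_i}.
\]
These share the common term $\bar\tau_i/m_i$ and differ only in the sign of $b_iv_i|v_i|/m_i$. Their minimum is therefore
\[
\frac{\bar\tau_i-b_i|v_i\,|v_i||}{m_i}=\frac{\bar\tau_i-b_iv_i^2}{m_i}.
\]
To make the symmetry explicit, split into the cases $v_i\ge 0$, where the minimum is $u_i$, and $v_i<0$, where it is $-\ell_i$. This matches the remark after \eqref{eq:directional-interval} that the roles of the two directions exchange with the sign of $v_i$.

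Finally, check positivity. For $|v_i|<v_{i,\max}=\sqrt{\bar\tau_i/b_i}$, we have $b_iv_i^2<\bar\tau_i$, so the minimum is strictly positive. The maximal admissible $r$ is then exactly \eqref{eq:sac}, and it is attained because the containment constraints are closed inequalities.

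There is no real obstacle here. The only point needing care is the reduction of the set-inclusion definition to the scalar bound $r\le\min\{-\ell_i,u_i\}$. One must also make sure that $\ell_i\le u_i$ always holds, which is immediate since $\bar\tau_i>0$. As a side remark, at $|v_i|\ge v_{i,\max}$ the same formula shows that the capacity is $0$, so \eqref{eq:sac} extends as $\max\{0,\cdot\}$, consistent with Fig.~\ref{fig:single-rotor-sac}.
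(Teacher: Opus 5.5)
Your proof is correct and matches the paper's argument: you reduce the inclusion to $r\le\min\{-\ell_i,u_i\}$, note that the two directional magnitudes differ only in the sign of $b_iv_i|v_i|/m_i$ (with their roles exchanged according to the sign of $v_i$), and so obtain the minimum $(\bar\tau_i-b_iv_i^2)/m_i$, which is positive exactly when $|v_i|<v_{i,\max}$. One minor caveat, affecting only your opening formula $\max\{0,\min\{-\ell_i,u_i\}\}$ and your closing side remark: for $|v_i|>v_{i,\max}$ one has $0\notin\mathcal A_i(v_i)$, so no $r\ge 0$ is admissible, and the value $0$ there (as plotted via $\max\{0,\cdot\}$ in Fig.~\ref{fig:single-rotor-sac}) is a convention rather than a consequence of the definition.
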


\begin{proof}
Write the endpoints of \eqref{eq:directional-interval} as $\underline\alpha_i(v_i)$ and $\overline\alpha_i(v_i)$. The radius of the largest centered subset is $\min\{\overline\alpha_i(v_i),-\underline\alpha_i(v_i)\}$. If $v_i\geq0$, these two nonnegative directional magnitudes are $(\bar\tau_i-b_iv_i^2)/m_i$ and $(\bar\tau_i+b_iv_i^2)/m_i$, respectively. If $v_i\leq0$, their order is exchanged. Their minimum is therefore \eqref{eq:sac}. It is positive exactly on the interval in \eqref{eq:vmax}.
\end{proof}

Thus, $\alpha_i$ does not approximate the full interval \eqref{eq:directional-interval}. It exactly records the part guaranteed to remain available in both directions. The additional authority in the easier direction is deliberately discarded.

Define the positive-capacity region
\begin{equation}
 \mathcal E
 =
 \prod_{i=1}^n(-v_{i,\max},v_{i,\max}).
 \label{eq:positive-capacity-domain}
\end{equation}
The subsequent Riemannian construction is restricted to this open region.

\subsection{Actuator-Capacity Metric}

Define the actuator-capacity metric and co-metric by
\begin{align}
 G(v)
 &=\diag\bigl(\alpha_1(v_1)^{-2},\ldots,\alpha_n(v_n)^{-2}\bigr),
 \label{eq:metric}\\
 C(v)
 &=G(v)^{-1}
 =\diag\bigl(\alpha_1(v_1)^2,\ldots,\alpha_n(v_n)^2\bigr).
 \label{eq:cometric}
\end{align}
Because every $\alpha_i$ is positive on $\mathcal E$, $G(v)$ is positive definite and defines a Riemannian metric. For a tangent vector $\xi\in T_v\mathcal E$,
\begin{equation}
 g_v(\xi,\xi)
 =\xi^\T G(v)\xi
 =\sum_{i=1}^n\left(\frac{\xi_i}{\alpha_i(v_i)}\right)^2.
 \label{eq:normalized-effort}
\end{equation}
The metric therefore measures a quadratic aggregate of rotor accelerations normalized by their local symmetric capacities.

\begin{remark}[Model scope]
Current and voltage limits, back electromotive force, active or passive braking, battery variation, thermal derating, and an embedded speed controller can modify the actual directional interval. The construction above is exact for \eqref{eq:rotor-dynamics}. A different certified centered capacity function can replace \eqref{eq:sac} without changing the local constructions in Secs.~\ref{sec:local-geometry} and \ref{sec:fiberwise-selection}.
\end{remark}

\section{Capacity-Aware Local Task-Rate Geometry}
\label{sec:local-geometry}

Restrict the task map to
\begin{equation}
 \Ereg
 =\mathcal E\cap\Vspace_{\mathrm{reg}}.
 \label{eq:regular-capacity-region}
\end{equation}
At $v\in\Ereg$, define the vertical space
\begin{equation}
 \mathcal V_v=\ker J_f(v)=T_v\Fibre_{f(v)}
 \label{eq:vertical-space}
\end{equation}
and its $G(v)$-orthogonal complement
\begin{equation}
 \mathcal H_v=\mathcal V_v^{\perp_G}.
 \label{eq:horizontal-space}
\end{equation}
The horizontal space contains the actuator velocities that change the task without a metric-orthogonal internal component.

\begin{proposition}
For every $v\in\Ereg$,
\begin{equation}
    \mathcal H_v
    =
    \operatorname{im}\!\left(
        C(v)J_f(v)^\top
    \right).
    \label{eq:horizontal-range}
\end{equation}
\end{proposition}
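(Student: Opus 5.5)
The plan is to prove the identity by showing one inclusion and then matching dimensions. Fix \(v\in\Ereg\) and abbreviate \(J=J_f(v)\), \(G=G(v)\) and \(C=C(v)=G^{-1}\). By definition, \(\mathcal H_v\) consists of the \(\eta\in\R^n\) with \(\xi^\T G\eta=0\) for every \(\xi\in\ker J\). Because \(G\) is positive definite on \(\mathcal E\), the form \(g_v\) is a genuine inner product on \(\R^n\). Hence \(\mathcal H_v\) is a linear subspace satisfying \(\mathcal V_v\oplus\mathcal H_v=\R^n\), so
\[
\dim\mathcal H_v=n-\dim\ker J=\rank J=m,
\]
where the last equality holds since \(v\in\Vspace_{\mathrm{reg}}\).

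\emph{Inclusion.} Take \(\eta=CJ^\T\lambda\) with \(\lambda\in\R^m\), and any \(\xi\in\ker J\). Since \(GC=I\),
\[
\xi^\T G\eta=\xi^\T GCJ^\T\lambda=(J\xi)^\T\lambda=0 .
\]
This gives \(\operatorname{im}(CJ^\T)\subseteq\mathcal H_v\).

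\emph{Dimension.} \(C\) is invertible, because it is diagonal with positive entries \(\alpha_i(v_i)^2\) on \(\mathcal E\). Therefore \(\rank(CJ^\T)=\rank J^\T=m\), so \(\operatorname{im}(CJ^\T)\) is an \(m\)-dimensional subspace of the \(m\)-dimensional space \(\mathcal H_v\). The two spaces coincide, which is \eqref{eq:horizontal-range}.

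As a cross-check, one can instead verify the reverse inclusion directly. If \(\eta\in\mathcal H_v\), then \(G\eta\in(\ker J)^{\perp}=\operatorname{im}J^\T\), where \(\perp\) is the Euclidean complement. So \(G\eta=J^\T\lambda\) for some \(\lambda\in\R^m\), and applying \(C\) gives \(\eta=CJ^\T\lambda\). This route needs only the linear-algebra fact \((\ker J)^\perp=\operatorname{im}J^\T\).

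The argument has no real obstacle. The only point needing care is that both \(G\) and \(C\) must be well defined and positive definite, which uses \(v\in\mathcal E\), and that the rank of \(J\) must equal \(m\), which uses \(v\in\Vspace_{\mathrm{reg}}\). Both hypotheses are built into \(\Ereg\).
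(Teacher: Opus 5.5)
Your proof is correct and follows the same route as the paper. Like the paper, you prove the inclusion $\operatorname{im}(C(v)J_f(v)^\top)\subseteq\mathcal H_v$ via $G(v)C(v)=I$ and then match $\operatorname{rank}(C(v)J_f(v)^\top)=m$ with $\dim\mathcal H_v=m$; you also justify $\dim\mathcal H_v=m$, which the paper only asserts, and your reverse-inclusion cross-check via $(\ker J)^\perp=\operatorname{im}J^\top$ is a valid alternative that needs no dimension count.
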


\begin{proof}
For any \(z\in\ker J_f(v)\) and \(y\in\mathbb R^m\),
\(
    z^\top G(v)C(v)J_f(v)^\top y
    =
    z^\top J_f(v)^\top y
    =
    0.
\)
Therefore,
\(
    \operatorname{im}\!\left(
        C(v)J_f(v)^\top
    \right)
    \subseteq
    \mathcal H_v.
\)
Moreover, \(J_f(v)\) has full row rank, so
\(
    \operatorname{rank}\!\left(
        C(v)J_f(v)^\top
    \right)
    =
    m.
\)
Since \(\dim\mathcal H_v=m\), the two spaces coincide.
\end{proof}

\subsection{Horizontal Minimum-Effort Lift}

Given a desired wrench rate $\dot w\in T_{f(v)}\Wspace$, consider
\begin{equation}
 \min_{\dot v\in T_v\mathcal E}
 \frac12\dot v^\T G(v)\dot v
 \quad\text{subject to}\quad
 J_f(v)\dot v=\dot w.
 \label{eq:local-lift-problem}
\end{equation}
Define the local task-rate capacity matrix
\begin{equation}
 M(v)=J_f(v)C(v)J_f(v)^\T.
 \label{eq:capacity-matrix}
\end{equation}
It is symmetric positive definite on $\Ereg$.

\begin{theorem}[Capacity-horizontal lift]
For every $v\in\Ereg$ and every $\dot w$, problem \eqref{eq:local-lift-problem} has the unique solution
\begin{equation}
 \dot v_H
 =C(v)J_f(v)^\T M(v)^{-1}\dot w.
 \label{eq:horizontal-lift}
\end{equation}
This solution belongs to $\mathcal H_v$.
\end{theorem}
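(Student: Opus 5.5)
The plan is to treat \eqref{eq:local-lift-problem} as a strictly convex quadratic program with a linear equality constraint. I would first show that the feasible set is nonempty and that the proposed \(\dot v_H\) belongs to it. Then I would use a metric-orthogonality (Pythagorean) decomposition to prove that it is the unique minimizer. This avoids invoking Lagrange multiplier theory, although the same formula would follow from stationarity of the Lagrangian.

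\textbf{Well-definedness and feasibility.} Since \(v\in\Ereg\), \(J_f(v)\) has full row rank \(m\). Also \(C(v)\) is diagonal with positive entries \(\alpha_i(v_i)^2\) on \(\mathcal E\). Hence \(M(v)=J_f(v)C(v)J_f(v)^\T\) is symmetric positive definite, because \(y^\T M(v) y=\|C(v)^{1/2}J_f(v)^\T y\|^2\) vanishes only if \(J_f(v)^\T y=0\), that is, only if \(y=0\). So \(M(v)^{-1}\) exists and \(\dot v_H\) is well defined. A direct computation gives
\[
J_f(v)\dot v_H = J_f(v)C(v)J_f(v)^\T M(v)^{-1}\dot w = \dot w,
\]
so \(\dot v_H\) is feasible. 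Moreover, \(\dot v_H\in\operatorname{im}(C(v)J_f(v)^\T)\), which equals \(\mathcal H_v\) by \eqref{eq:horizontal-range}. This settles the last claim.

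\textbf{Optimality and uniqueness.} Every feasible \(\dot v\) can be written as \(\dot v=\dot v_H+z\) with \(z\in\ker J_f(v)=\mathcal V_v\). Since \(\dot v_H\in\mathcal H_v=\mathcal V_v^{\perp_G}\), the cross term vanishes: \(z^\T G(v)\dot v_H=z^\T J_f(v)^\T M(v)^{-1}\dot w=0\). Expanding the objective then gives
\[
\tfrac12\dot v^\T G(v)\dot v=\tfrac12\dot v_H^\T G(v)\dot v_H+\tfrac12 z^\T G(v) z .
\]
Because \(G(v)\) is positive definite, the second term is nonnegative and vanishes only when \(z=0\). Thus \(\dot v_H\) is the unique minimizer.

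No step here is a genuine obstacle. The only points needing care are the invertibility of \(M(v)\), which is exactly where the restriction to \(\Ereg\) (full rank of \(J_f\) and positivity of every \(\alpha_i\)) is used, and the exact cancellation \(G(v)C(v)=I\) in the orthogonality computation. As a by-product, the minimum value equals \(\tfrac12\dot w^\T M(v)^{-1}\dot w\), which anticipates the minimum-effort interpretation of the capability ellipsoid.
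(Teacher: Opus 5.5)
Your proof is correct, but it runs in the opposite direction from the paper's. The paper \emph{derives} the formula. It notes strict convexity and feasibility, writes the Lagrangian, and obtains $\dot v=C(v)J_f(v)^\T\lambda$ from stationarity. The constraint then fixes $\lambda=M(v)^{-1}\dot w$. Existence of a minimizer and the fact that the first-order condition characterizes it are left to standard convex-optimization facts.

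You instead take the candidate as given, check feasibility, and \emph{certify} optimality with the Pythagorean splitting $\dot v=\dot v_H+z$, $z\in\mathcal V_v$. The cross term vanishes because $G(v)C(v)=I$ and $J_f(v)z=0$.

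What your route buys:
\begin{itemize}
\item It is self-contained: existence, optimality and uniqueness all follow from a single identity, with no appeal to multiplier theory.
\item It makes the geometric reason explicit: the lift is optimal because it is $G(v)$-orthogonal to the fiber.
\item It yields the optimal value $\tfrac12\dot w^\T M(v)^{-1}\dot w$ directly, which the paper only obtains afterwards in \eqref{eq:min-effort-identity}.
\item You also prove that $M(v)$ is positive definite, which the paper only asserts before the theorem.
\end{itemize}

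What the paper's route buys is that it explains where the formula comes from, rather than requiring it to be known in advance.
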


\begin{proof}
The objective is strictly convex because $G(v)$ is positive definite, and the affine constraint is feasible because $J_f(v)$ has full row rank. The stationarity condition for
\begin{equation*}
 \frac12\dot v^\T G(v)\dot v-\lambda^\T(J_f(v)\dot v-\dot w)
\end{equation*}
is $G(v)\dot v-J_f(v)^\T\lambda=0$. Hence $\dot v=C(v)J_f(v)^\T\lambda$. Applying the constraint gives $M(v)\lambda=\dot w$, which yields \eqref{eq:horizontal-lift}. Membership in $\mathcal H_v$ follows from \eqref{eq:horizontal-range}.
\end{proof}

Equation \eqref{eq:horizontal-lift} is the state-dependent weighted minimum-norm solution at a fixed actuator state. Its geometric interpretation is the unique capacity-horizontal lift. The outer operating-point selection in Sec.~\ref{sec:fiberwise-selection} instead compares different finite states on the same nonlinear fiber.

\subsection{Task-Rate Capability Ellipsoid}

\begin{proposition}
The minimum normalized actuator effort required to realize $\dot w$ is
\begin{equation}
 \min_{\dot v:J_f(v)\dot v=\dot w}
 \dot v^\T G(v)\dot v
 =\dot w^\T M(v)^{-1}\dot w.
 \label{eq:min-effort-identity}
\end{equation}
Moreover, the image of the actuator unit ball
\begin{equation}
 \mathcal B_v=\{\dot v:\dot v^\T G(v)\dot v\leq1\}
 \label{eq:actuator-unit-ball}
\end{equation}
through $J_f(v)$ is the task-rate ellipsoid
\begin{equation}
 \mathcal E_v^w
 =\{\dot w:\dot w^\T M(v)^{-1}\dot w\leq1\}.
 \label{eq:task-ellipsoid}
\end{equation}
\end{proposition}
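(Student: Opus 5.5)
The first claim follows from the Capacity-horizontal lift theorem, and the second from it together with a scaling argument.

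For the identity \eqref{eq:min-effort-identity}, I would evaluate the objective at the unique minimizer \eqref{eq:horizontal-lift}. Since the objective in \eqref{eq:local-lift-problem} differs from $\dot v^\T G(v)\dot v$ only by the factor $\tfrac12$, both have the same minimizer $\dot v_H$. Substituting $\dot v_H=C J_f^\T M^{-1}\dot w$ and using $C G C=C$ (from $C=G^{-1}$) gives
\[
\dot v_H^\T G\dot v_H=\dot w^\T M^{-1}J_f C G C J_f^\T M^{-1}\dot w=\dot w^\T M^{-1}J_fCJ_f^\T M^{-1}\dot w .
\]
With $M=J_fCJ_f^\T$ this collapses to $\dot w^\T M^{-1}\dot w$. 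Here $M(v)$ is invertible on $\Ereg$, as already noted after \eqref{eq:capacity-matrix}.

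For the ellipsoid claim, I would prove two inclusions between $J_f(v)\mathcal B_v$ and $\mathcal E_v^w$.
\begin{itemize}
\item $J_f(v)\mathcal B_v\subseteq\mathcal E_v^w$: if $\dot v\in\mathcal B_v$ and $\dot w=J_f\dot v$, then $\dot v$ is feasible for the minimization. By \eqref{eq:min-effort-identity}, $\dot w^\T M^{-1}\dot w\le\dot v^\T G\dot v\le1$.
\item $\mathcal E_v^w\subseteq J_f(v)\mathcal B_v$: if $\dot w^\T M^{-1}\dot w\le1$, the horizontal lift $\dot v_H$ satisfies $J_f\dot v_H=\dot w$. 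Its effort equals $\dot w^\T M^{-1}\dot w\le1$, so $\dot v_H\in\mathcal B_v$.
\end{itemize}

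No step is a real obstacle. The only points needing care are that the identity holds for every $\dot w$, including $\dot w=0$ where both sides vanish, and that the constraint is always feasible because $J_f(v)$ has full row rank on $\Ereg$. The full-rank fact is what makes $M(v)$ positive definite, so $\mathcal E_v^w$ is a genuine nondegenerate ellipsoid.
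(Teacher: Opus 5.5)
Your proof is correct and takes the same route as the paper. You substitute the capacity-horizontal lift into the objective (using $CGC=C$ and $M=J_fCJ_f^\T$) to obtain the identity, and you characterize the image of $\mathcal B_v$ by the existence of a compatible actuator rate of effort at most one, written out as two inclusions with the lift showing the minimum is attained. The ``scaling argument'' announced in your first sentence is never used and can be dropped.
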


\begin{proof}
Substitution of \eqref{eq:horizontal-lift} gives \eqref{eq:min-effort-identity}. A task rate belongs to the image of $\mathcal B_v$ if and only if at least one compatible actuator velocity has metric norm no greater than one. By \eqref{eq:min-effort-identity}, this is equivalent to the inequality in \eqref{eq:task-ellipsoid}.
\end{proof}

Let $\omega_m$ be the Euclidean volume of the unit ball in $\R^m$. The ellipsoid volume is
\begin{equation}
 \Vol(\mathcal E_v^w)
 =\omega_m\sqrt{\det M(v)}.
 \label{eq:ellipsoid-volume}
\end{equation}
We therefore define the DAAM indexes:
\begin{align}
 \rho(v)&=\sqrt{\det M(v)},
 \label{eq:volume-proxy}\\
 \ell(v)&=\log\rho(v)=\frac12\log\det M(v).
 \label{eq:log-volume}
\end{align}

The inverse $M(v)^{-1}$ defines an inner product on the task tangent space for each fixed $v\in\Ereg$. It is \emph{state attached}: in general it varies along $f^{-1}(w)$ and therefore does not define a metric depending only on the task point $w$.

For the signed-quadratic map, direct substitution gives
\begin{equation}
 M(v)
 =4A\diag\bigl(v_1^2\alpha_1(v_1)^2,\ldots,
 v_n^2\alpha_n(v_n)^2\bigr)A^\T.
 \label{eq:explicit-capacity-matrix}
\end{equation}
Each rotor contribution contains two factors. The term $v_i^2$ captures the differential of the signed-quadratic thrust law and vanishes at zero speed. The term $\alpha_i(v_i)^2$ captures the remaining symmetric acceleration authority and vanishes at the capacity boundary. Their product has an interior maximum.

\begin{corollary}
For a single rotor contribution under \eqref{eq:sac}, the scalar factor
\begin{equation}
 h_i(v_i)=v_i^2\alpha_i(v_i)^2
 \label{eq:rotor-contribution}
\end{equation}
is maximized at $|v_i|=v_{i,\max}/\sqrt{3}$.
\end{corollary}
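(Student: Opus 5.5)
The plan is to reduce the claim to a one-variable optimization over the closed interval $[0,v_{i,\max}]$ and then locate its interior critical point.

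First, substitute \eqref{eq:sac} into \eqref{eq:rotor-contribution}. This gives
\[
 h_i(v_i)=\frac{1}{m_i^2}\,v_i^2\bigl(\bar\tau_i-b_iv_i^2\bigr)^2 .
\]
The function $h_i$ depends on $v_i$ only through $s=v_i^2$, so it is even. Since $m_i^{-2}>0$ is a fixed positive factor, it suffices to maximize $\phi(s)=s(\bar\tau_i-b_is)^2$ over $s\in[0,v_{i,\max}^2]$. The relevant domain is the closure of the positive-capacity interval defined in \eqref{eq:vmax}.

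Next, observe that $\phi$ vanishes at both endpoints, $s=0$ and $s=\bar\tau_i/b_i=v_{i,\max}^2$, and is strictly positive in between. Because $\phi$ is continuous on a compact interval, it attains its maximum, and that maximizer must lie in the open interior.

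Then differentiate:
\[
 \phi'(s)=(\bar\tau_i-b_is)^2-2b_is(\bar\tau_i-b_is)=(\bar\tau_i-b_is)(\bar\tau_i-3b_is).
\]
On the open interval, the first factor is strictly positive. The only interior critical point is therefore $s=\bar\tau_i/(3b_i)=v_{i,\max}^2/3$. The sign of $\phi'$ changes from positive to negative there, which confirms that it is the unique maximizer.

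Finally, translate back to the original variable. The optimum $s=v_{i,\max}^2/3$ corresponds to $|v_i|=v_{i,\max}/\sqrt3$, with maximal value $h_i=4\bar\tau_i^3/(27\,b_i m_i^2)$. In the original variable this gives two symmetric maximizers, $\pm v_{i,\max}/\sqrt3$, consistent with the evenness of $h_i$.

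No step poses a real obstacle. The only point requiring care is the choice of domain. Outside $\mathcal E$, formula \eqref{eq:sac} no longer represents the SAC, which is $0$ there, while the polynomial $\phi$ would grow without bound. The statement must therefore be read on $|v_i|\le v_{i,\max}$, where the SAC coincides with \eqref{eq:sac}; extending $h_i$ by zero beyond the boundary does not change the maximizer.
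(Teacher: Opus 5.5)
Your proof is correct and follows the paper's argument. Both proofs reduce $h_i$, up to the factor $m_i^{-2}$, to $v_i^2(\bar\tau_i-b_iv_i^2)^2$, find the unique nonzero interior critical point $v_i^2=\bar\tau_i/(3b_i)$ from the factor $(\bar\tau_i-3b_iv_i^2)$, and conclude it is the maximum because $h_i$ vanishes at $v_i=0$ and at $|v_i|=v_{i,\max}$; your substitution $s=v_i^2$ and explicit compactness and sign-change reasoning just spell out what the paper's one-line endpoint argument leaves implicit.
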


\begin{proof}
Apart from a positive constant, $h_i(v_i)=v_i^2(\bar\tau_i-b_iv_i^2)^2$. Differentiation gives
\begin{equation*}
 h_i'(v_i)
 =2v_i(\bar\tau_i-b_iv_i^2)(\bar\tau_i-3b_iv_i^2)/m_i^2.
\end{equation*}
The nonzero interior critical points satisfy $v_i^2=\bar\tau_i/(3b_i)$ and are maxima because $h_i$ vanishes at zero and at the capacity boundaries.
\end{proof}

\section{Fiberwise Capacity-Aware Operating-Point Selection}
\label{sec:fiberwise-selection}

Let $K\subset\Ereg$ be a nonempty compact set representing the regular operating region retained for selection. For a task value $w$, define
\begin{equation}
 K_w=K\cap f^{-1}(w).
 \label{eq:compact-fiber}
\end{equation}
Whenever $K_w$ is nonempty, define the capacity-maximizer correspondence
\begin{equation}
 \mathcal P_w
 =\argmax_{v\in K_w}\ell(v).
 \label{eq:maximizer-correspondence}
\end{equation}
The compact restriction avoids making an attainment claim on the open positive-capacity region.

\begin{proposition}[Existence]
If $K_w$ is nonempty, then $\mathcal P_w$ is nonempty and compact.
\end{proposition}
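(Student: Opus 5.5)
The plan is a direct Weierstrass argument: show that $K_w$ is compact and that $\ell$ is continuous on it, then take the maximizing set as a closed subset of a compact set.

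\emph{Compactness of $K_w$.} The actuation map $f$ in \eqref{eq:task-map} is continuous, indeed $C^1$ by \eqref{eq:task-jacobian}, on all of $\R^n$. Hence the singleton preimage $f^{-1}(w)$ is closed in $\R^n$. The set $K_w=K\cap f^{-1}(w)$ is therefore the intersection of a compact set with a closed set, so it is compact. By hypothesis it is also nonempty.

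\emph{Continuity of $\ell$ on $K_w$.} Since $K\subset\Ereg$, every $v\in K_w$ lies in $\mathcal E$ and in $\Vspace_{\mathrm{reg}}$.
\begin{itemize}
\item On $\mathcal E$, each $\alpha_i$ is given by \eqref{eq:sac}, which is a polynomial and positive.
\item By \eqref{eq:explicit-capacity-matrix}, the entries of $M(v)$ are polynomials in $v$, so $\det M$ is continuous on $\R^n$.
\item On $\Ereg$, $J_f(v)$ has full row rank $m$ and $C(v)$ is positive definite. Hence $M(v)$ is positive definite and $\det M(v)>0$.
\end{itemize}
Therefore $\ell=\tfrac12\log\det M$ is continuous on $\Ereg$, and in particular on $K_w$.

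\emph{Conclusion.} By the Weierstrass extreme value theorem, $\ell$ attains its supremum $\ell^\star$ on the nonempty compact set $K_w$, so $\mathcal P_w\neq\emptyset$. Moreover, $\mathcal P_w=K_w\cap\ell^{-1}(\{\ell^\star\})$. The level set $\ell^{-1}(\{\ell^\star\})$ is closed relative to $K_w$ by continuity, so $\mathcal P_w$ is a closed subset of a compact set and is therefore compact.

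\emph{Main obstacle.} The only delicate point is that $\ell$ is not defined or continuous on all of $\R^n$: it diverges to $-\infty$ where $\det M\to0$, namely at the capacity boundary or at rank loss. This is handled by the standing assumption $K\subset\Ereg$. Because of it, $\det M$ stays strictly positive on $K$, and no attainment issue on the open region $\mathcal E$ arises.
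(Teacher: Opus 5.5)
Your proposal is correct and follows the paper's own argument: continuity and positive definiteness of $M$ on $K\subset\Ereg$ make $\ell$ continuous, $K_w$ is compact as a closed subset of compact $K$, and the extreme value theorem gives attainment. You also spell out two steps the paper leaves implicit, namely that $f^{-1}(w)$ is closed because $f$ is continuous, and that $\mathcal P_w$ is compact as the closed level set $\ell^{-1}(\{\ell^\star\})\cap K_w$.
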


\begin{proof}
The matrix $M(v)$ is positive definite and continuous on $K\subset\Ereg$, so $\ell$ is continuous there. The set $K_w$ is a closed subset of compact $K$ and is therefore compact. The conclusion follows from the extreme-value theorem.
\end{proof}

\subsection{Task-Coordinate Invariance}

Consider a smooth change of task coordinates $\widetilde w=\psi(w)$ with nonsingular Jacobian $T(w)=D\psi(w)$. The transformed task-map Jacobian is $\widetilde J_f(v)=T(f(v))J_f(v)$, and thus
\begin{equation}
 \widetilde M(v)
 =T(f(v))M(v)T(f(v))^\T.
 \label{eq:capacity-transform}
\end{equation}

\begin{proposition}[Fiberwise task-coordinate invariance]
For every $w$ and every $v\in K_w$,
\begin{equation}
 \widetilde\rho(v)
 =|\det T(w)|\rho(v).
 \label{eq:volume-transform}
\end{equation}
Consequently, the maximizing set $\mathcal P_w$ is unchanged by the task-coordinate change.
\end{proposition}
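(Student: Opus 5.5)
The plan is to take determinants in the transformation law \eqref{eq:capacity-transform} and then observe that the resulting scaling factor is constant along the fiber. Fix $w$ and $v\in K_w$, so that $f(v)=w$ and $T(f(v))=T(w)$. By the multiplicativity of the determinant and $\det T^\T=\det T$, \eqref{eq:capacity-transform} gives
\begin{equation*}
 \det\widetilde M(v)=\det T(w)\,\det M(v)\,\det T(w)^\T=(\det T(w))^2\det M(v).
\end{equation*}
Both $\det M(v)$ and $\det\widetilde M(v)$ are positive, since $M(v)$ is positive definite on $\Ereg$ and $T(w)$ is nonsingular. Taking positive square roots therefore yields $\widetilde\rho(v)=|\det T(w)|\,\rho(v)$, which is \eqref{eq:volume-transform}. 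In logarithmic form this reads $\widetilde\ell(v)=\ell(v)+\log|\det T(w)|$.

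For the invariance of the maximizing set, the key point is that the additive term $\log|\det T(w)|$ depends only on the task value $w$, not on $v$. It is therefore constant on $K_w$. The fiber itself is also unchanged as a set: $\psi$ is a diffeomorphism, hence injective, so $\{v\in K:\psi(f(v))=\psi(w)\}=K\cap f^{-1}(w)=K_w$. Maximizing $\widetilde\ell$ over $K_w$ thus amounts to maximizing $\ell$ plus a constant, and the argmax sets in \eqref{eq:maximizer-correspondence} coincide. By the Existence proposition, this common set is nonempty whenever $K_w$ is nonempty.

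I expect the only delicate step to be the second one: recognizing that $T$ is evaluated at $f(v)=w$, which is fixed on the fiber. The factor $T(f(v))$ in \eqref{eq:capacity-transform} depends on $v$ only through the task value. For a general, non-fiberwise comparison across different task values, the factor $|\det T|$ would vary and could reorder candidates. So the argument must explicitly restrict to $K_w$ and use that $\psi$ preserves fibers. The remaining regularity requirement is that $\psi$ be a local diffeomorphism with $\det T(w)\neq0$. This keeps $\widetilde M(v)$ positive definite, so that $\widetilde\ell$ is well defined on the same compact set.
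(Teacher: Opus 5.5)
Your proof is correct and follows the paper's argument: take determinants in the transformation law to get $\det\widetilde M(v)=(\det T(f(v)))^2\det M(v)$, then use that $f(v)=w$ on $K_w$, so the positive factor $|\det T(w)|$ does not depend on $v$ and leaves the argmax unchanged. Your extra check that $K_w$ is the same set in the new coordinates is left implicit in the paper; note that it uses global injectivity of $\psi$ (a genuine change of coordinates), not merely a local diffeomorphism as your final paragraph suggests.
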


\begin{proof}
Taking determinants in \eqref{eq:capacity-transform} gives $\det\widetilde M(v)=\det T(f(v))^2\det M(v)$. On $K_w$, $f(v)=w$, so the positive factor $|\det T(w)|$ is constant with respect to the optimization variable.
\end{proof}

\subsection{Interior First-Order Conditions}

The following calculation is included to support implementation and interpretation of interior stationary points. It is restricted to $v\in\Ereg$ with $v_i\neq0$ for every $i$.

Let $a_i$ now denote the $i$th column of $A$ and define $h_i(v_i)=v_i^2\alpha_i(v_i)^2$. From \eqref{eq:explicit-capacity-matrix},
\begin{equation}
 \frac{\partial M}{\partial v_i}
 =4h_i'(v_i)a_ia_i^\T.
 \label{eq:M-derivative}
\end{equation}
Hence
\begin{align}
 \frac{\partial\ell}{\partial v_i}
 &=\frac12\operatorname{tr}
 \left(M(v)^{-1}\frac{\partial M}{\partial v_i}\right)
 \nonumber\\
 &=2h_i'(v_i)a_i^\T M(v)^{-1}a_i
 \nonumber\\
 &=4v_i\alpha_i(v_i)
 \left(\alpha_i(v_i)-\frac{2b_iv_i^2}{m_i}\right)
 a_i^\T M(v)^{-1}a_i.
 \label{eq:gradient}
\end{align}
Equivalently,
\begin{equation}
 \frac{\partial\ell}{\partial v_i}
 =\frac{4v_i}{m_i^2}
 (\bar\tau_i-b_iv_i^2)
 (\bar\tau_i-3b_iv_i^2)
 a_i^\T M(v)^{-1}a_i.
 \label{eq:gradient-expanded}
\end{equation}

For the Lagrangian $L(v,\lambda)=\ell(v)-\lambda^\T(f(v)-w)$, interior stationarity requires
\begin{equation}
 \frac{\partial\ell}{\partial v_i}
 -2|v_i|a_i^\T\lambda=0.
 \label{eq:kkt-component}
\end{equation}
The factor $|v_i|$ in \eqref{eq:kkt-component} is essential. If $v_i\neq0$, division by $2|v_i|$ yields the equivalent chamberwise relation
\begin{equation}
 2\operatorname{sgn}(v_i)\alpha_i(v_i)
 \left(\alpha_i(v_i)-\frac{2b_iv_i^2}{m_i}\right)
 a_i^\T M(v)^{-1}a_i
 =a_i^\T\lambda.
 \label{eq:kkt-cancelled}
\end{equation}
Neither expression applies as a first-order characterization of candidates with $v_i=0$ after division. Such points must be checked separately.

\section{Canonical Illustrations}
\label{sec:illustrations}

This section uses low-dimensional systems to visualize the SAC-induced
actuator geometry, DAAM variation along allocation fibers, and fiberwise
operating-point selection.

\subsection{Two Rotors and One Scalar Task}
\label{subsec:two-one}

Consider
\begin{equation}
 f(v_1,v_2)
 =
 A_1v_1|v_1|+A_2v_2|v_2|,
 \label{eq:dipole-map}
\end{equation}
where \(A_1,A_2\neq 0\). The task-rate capability matrix is scalar:
\begin{equation}
 M(v)
 =
 4\sum_{i=1}^2
 A_i^2v_i^2\alpha_i(v_i)^2.
 \label{eq:dipole-capacity}
\end{equation}
Hence, \(\sqrt{M(v)}\) is the radius of the bidirectional task-rate interval
generated by the actuator-capacity unit ball. For a scalar task, maximizing
the DAAM index on a fiber therefore maximizes the bidirectional task-rate
authority.

\begin{figure}[t]
  \centering
  \includegraphics[width=0.94\linewidth]{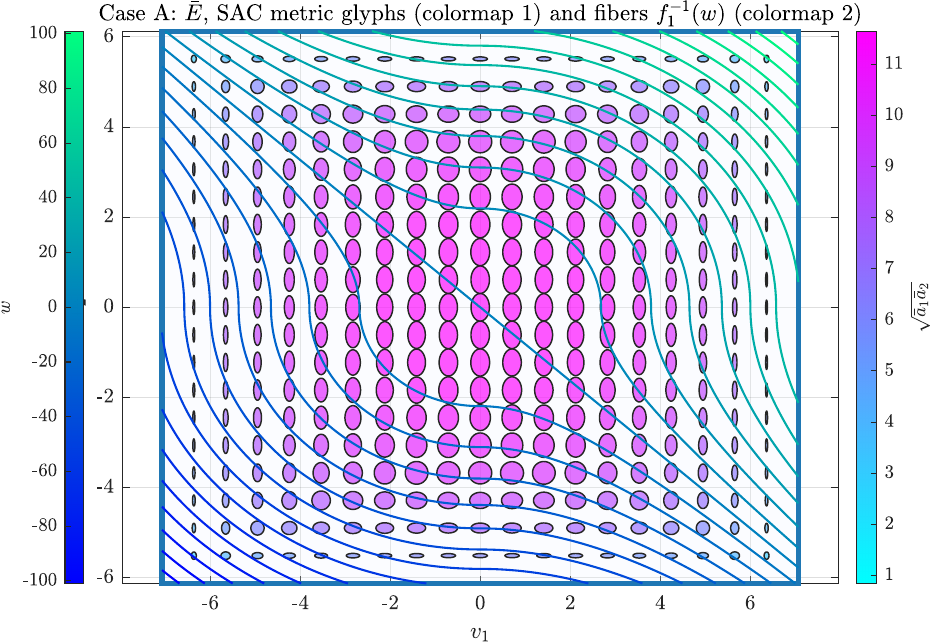}
  \caption{Rotor-speed space \((v_1,v_2)\), positive-capacity region
  \(\mathcal E\), unit balls of the SAC-induced actuator metric \(G(v)\), and
  representative allocation fibers. The ellipses contract along directions
  in which the corresponding SAC approaches zero and are uniformly scaled
  for readability.}
  \label{fig:metric-fibers}
\end{figure}

Figure~\ref{fig:metric-fibers} shows the state dependence of the
actuator-capacity unit ball. Approaching a capacity boundary reduces the
bidirectional acceleration authority along the corresponding actuator
direction. The allocation fibers make it possible to compare this local
geometry among states generating the same scalar task value.

\begin{figure}[t]
  \centering
  \includegraphics[width=0.94\linewidth]
  {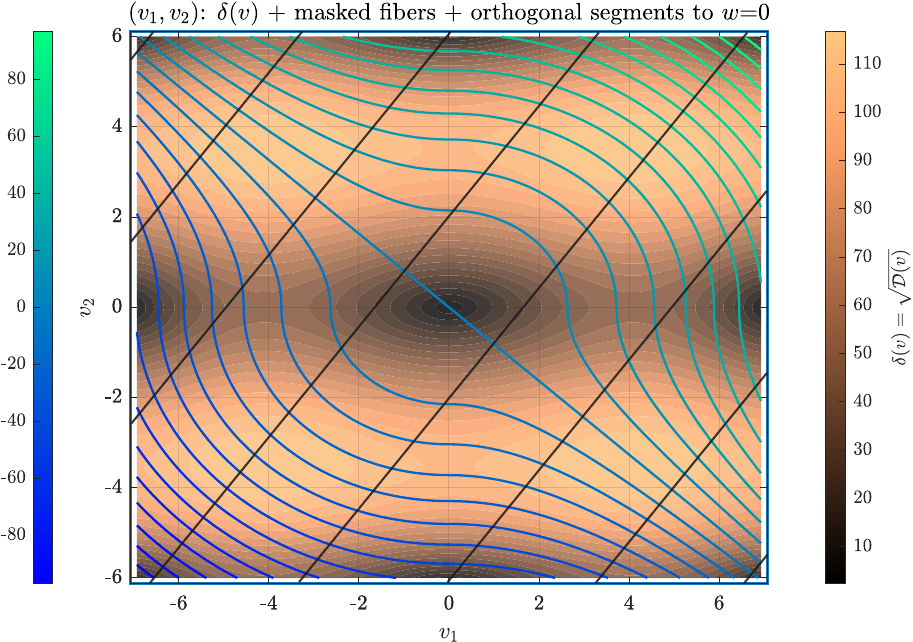}
  \caption{Two-rotor DAAM landscape, representative allocation fibers, and
  straight sampling slices in rotor-speed space. The slices provide
  one-dimensional traces of the landscape and are not generally normal to
  the nonlinear fibers.}
  \label{fig:capacity-landscape}
\end{figure}

The landscape in Fig.~\ref{fig:capacity-landscape} reflects the two factors
in \eqref{eq:explicit-capacity-matrix}. Near zero rotor speed, the differential
of the signed-quadratic actuation map is weak. Near the positive-capacity
boundaries, aerodynamic drag leaves little acceleration authority guaranteed
in both directions. The largest DAAM values consequently occur at
intermediate rotor speeds, subject to the combinations permitted by each
allocation fiber.

\begin{figure}[t]
  \centering
  \includegraphics[width=0.94\linewidth]
  {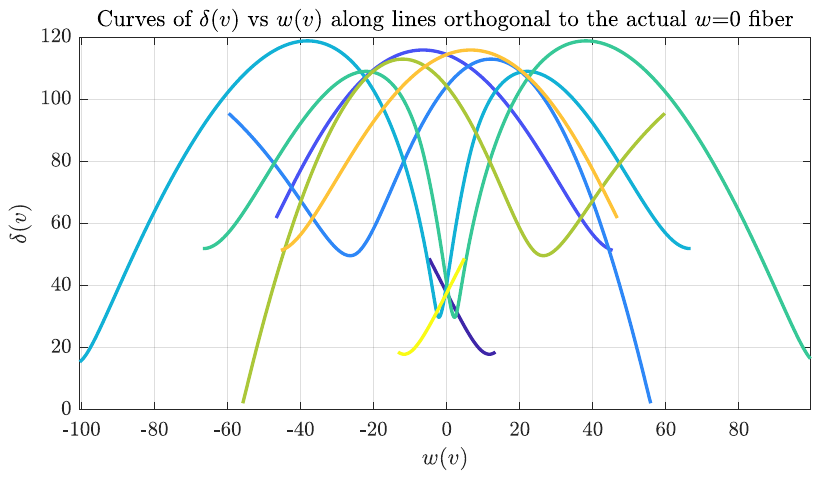}
  \caption{DAAM index versus task value along the straight sampling slices in
  Fig.~\ref{fig:capacity-landscape}. The upper envelope refers to the sampled
  slices, whereas the operating-point criterion maximizes DAAM over each
  complete allocation fiber.}
  \label{fig:capacity-traces}
\end{figure}

Figure~\ref{fig:capacity-traces} provides a one-dimensional representation of
the DAAM landscape along the selected slices. These traces differ from the
fiberwise optimization in \eqref{eq:maximizer-correspondence}, which searches
over all task-equivalent actuator states.

\subsection{Two-Rotor Parameter Study}
\label{subsec:two-parameter}

Figure~\ref{fig:parameter-gallery} examines the dependence of the fiberwise
DAAM maximizers on the static actuation coefficients and the rotor parameters
entering the SAC. Starting from a symmetric baseline, each panel varies one
parameter. The background represents the DAAM index, the thin red curves are
allocation fibers, and the yellow markers identify numerically computed
fiberwise maximizers within the displayed compact operating domain.

\begin{table}[t]
\centering
\caption{Two-rotor parameter variations in
Fig.~\ref{fig:parameter-gallery}.}
\label{tab:two-rotor-parameters}
\footnotesize
\begin{tabular}{lll}
\toprule
Panel & Case & Change from baseline \\
\midrule
\ref{fig:caseA_small_a1}   & Smaller coefficient 1 & \(A_1=0.7\) \\
\ref{fig:caseA_balanced}   & Symmetric baseline    & None \\
\ref{fig:caseA_small_a2}   & Smaller coefficient 2 & \(A_2=0.7\) \\
\ref{fig:caseA_tiny_m1}    & Smaller inertia       & \(m_1=0.005\) \\
\ref{fig:caseA_tiny_tau1}  & Smaller torque limit  & \(\bar\tau_1=0.1\) \\
\ref{fig:caseA_tiny_b1}    & Smaller drag          & \(b_1=0.01\) \\
\bottomrule
\end{tabular}
\end{table}

\begin{figure*}[t]
    \centering
    \begin{subfigure}{0.32\textwidth}
        \centering
        \includegraphics[width=\linewidth]{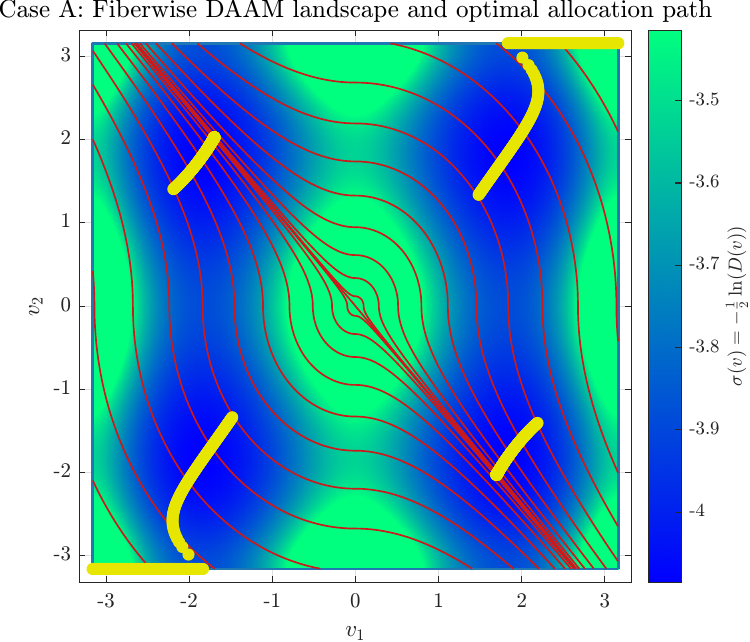}
        \subcaption{\(A_1=0.7\)}
        \label{fig:caseA_small_a1}
    \end{subfigure}\hfill
    \begin{subfigure}{0.32\textwidth}
        \centering
        \includegraphics[width=\linewidth]{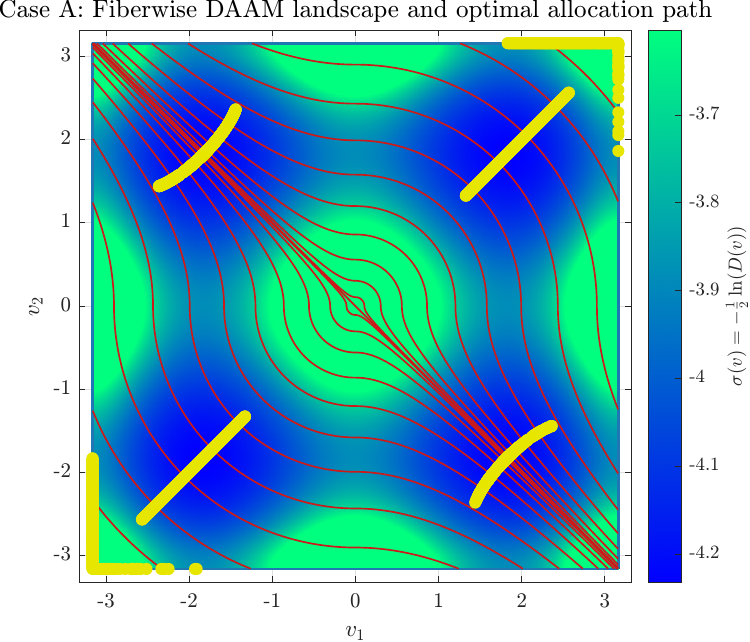}
        \subcaption{Symmetric baseline}
        \label{fig:caseA_balanced}
    \end{subfigure}\hfill
    \begin{subfigure}{0.32\textwidth}
        \centering
        \includegraphics[width=\linewidth]{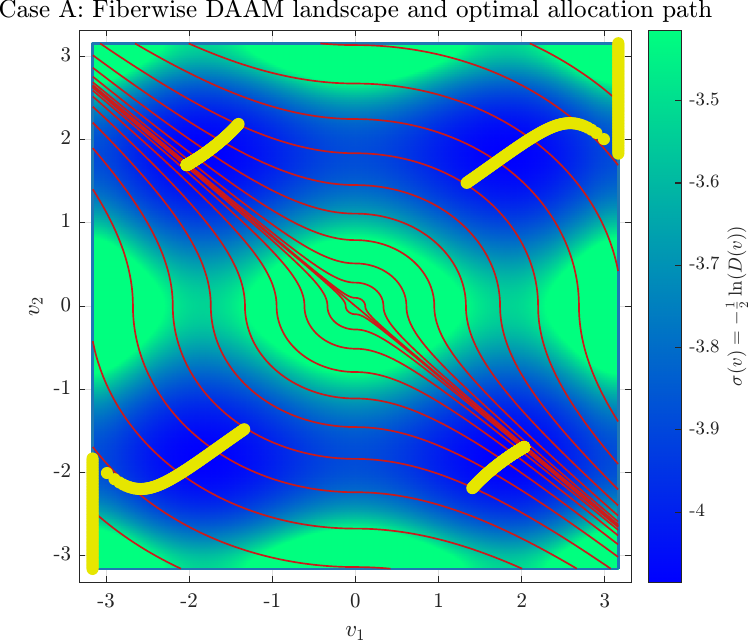}
        \subcaption{\(A_2=0.7\)}
        \label{fig:caseA_small_a2}
    \end{subfigure}

    \vspace{0.6em}

    \begin{subfigure}{0.32\textwidth}
        \centering
        \includegraphics[width=\linewidth]
        {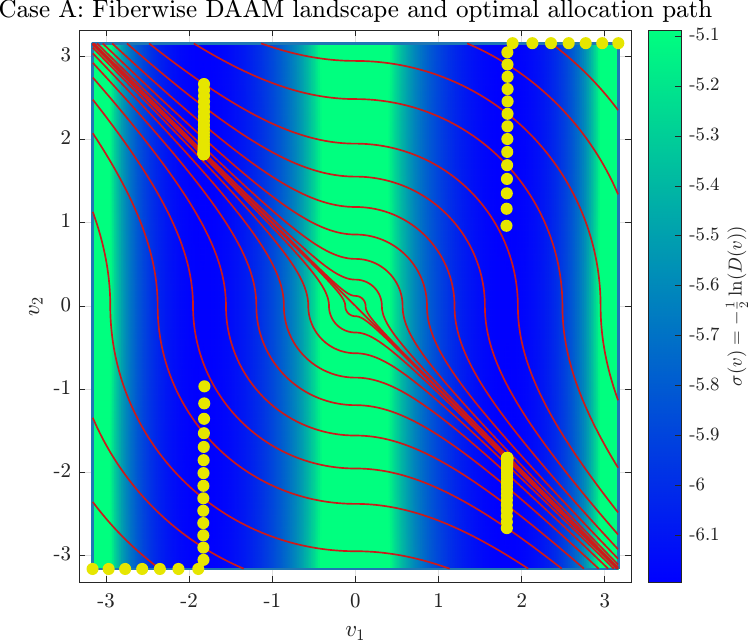}
        \subcaption{\(m_1=0.005\)}
        \label{fig:caseA_tiny_m1}
    \end{subfigure}\hfill
    \begin{subfigure}{0.15\textwidth}
        \centering
        \includegraphics[width=\linewidth]
        {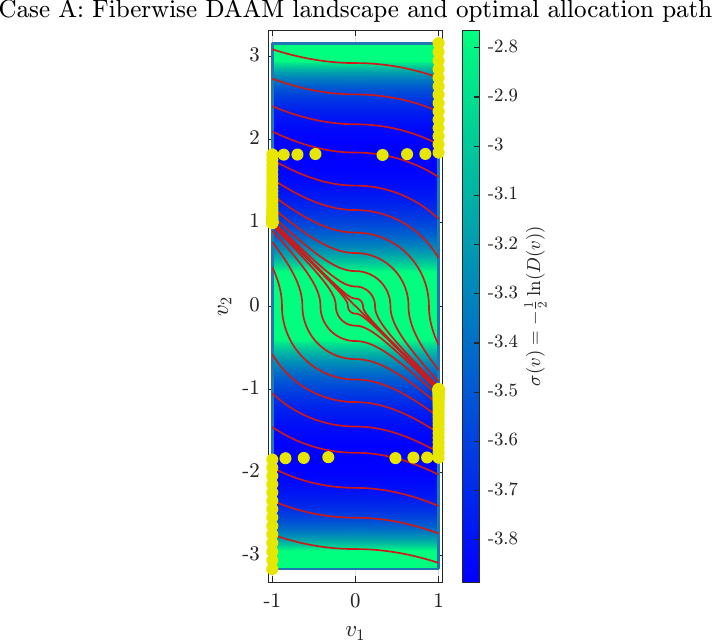}
        \subcaption{\(\bar\tau_1=0.1\)}
        \label{fig:caseA_tiny_tau1}
    \end{subfigure}\hfill
    \begin{subfigure}{0.48\textwidth}
        \centering
        \includegraphics[width=\linewidth]
        {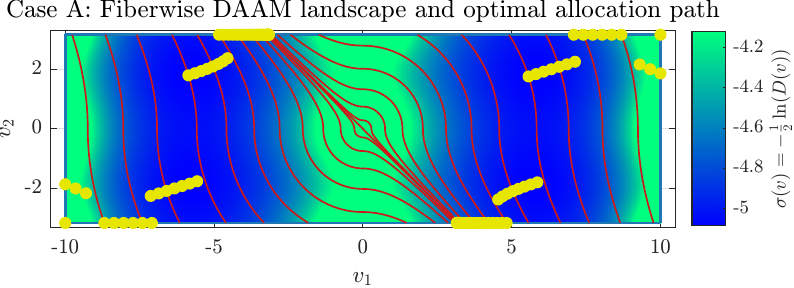}
        \subcaption{\(b_1=0.01\)}
        \label{fig:caseA_tiny_b1}
    \end{subfigure}

    \caption{Two-rotor scalar-task DAAM landscapes and fiberwise maximizing
    candidates. Individual parameters are varied from the symmetric baseline
    \(A_1=A_2=1\), \(b_1=b_2=0.1\), \(m_1=m_2=0.05\), and
    \(\bar\tau_1=\bar\tau_2=1\). Thin red curves are allocation fibers, and
    yellow markers identify numerically computed DAAM maximizers on the
    displayed compact operating domain.}
    \label{fig:parameter-gallery}
\end{figure*}

The first row varies the static actuation coefficients. The baseline in
Fig.~\ref{fig:caseA_balanced} produces symmetric DAAM and fiber patterns.
Reducing \(A_1\) or \(A_2\) changes both the allocation fibers and the
corresponding rotor contribution to the task-rate capability matrix. The
cases \(A_1=0.7\) and \(A_2=0.7\) consequently exhibit exchanged-coordinate
versions of the same asymmetric structure.

The second row varies parameters entering the SAC. Reducing \(m_1\) increases
the normalized capacity contribution of the first rotor throughout its
admissible speed range and substantially reorganizes the maximizing
candidates. Reducing \(\bar\tau_1\) contracts the positive-capacity interval
of the first rotor, narrowing the domain along \(v_1\). Reducing \(b_1\)
expands that interval and produces the elongated domain in
Fig.~\ref{fig:caseA_tiny_b1}.

The yellow candidates form separated pieces whose locations and shapes depend
on the task value and propulsion parameters. Some pieces also meet the
boundary of the displayed domain. Thus, even in this two-rotor setting, the
fiberwise optimizer is naturally represented by the correspondence
\(\mathcal P_w\), without presuming a unique smooth branch.

\subsection{Three Rotors and One Scalar Task}
\label{subsec:three-one}

For three rotors and one scalar task,
\begin{equation}
 f(v)
 =
 \sum_{i=1}^3 A_iv_i|v_i|,
 \qquad
 M(v)
 =
 4\sum_{i=1}^3 A_i^2v_i^2\alpha_i(v_i)^2.
 \label{eq:three-one}
\end{equation}
Every regular allocation fiber is locally a two-dimensional surface in the
three-dimensional actuator space. The example in
Fig.~\ref{fig:three-one} uses the symmetric parameters
\(A_1=A_2=A_3=1\), \(b_i=0.1\), \(m_i=0.05\), and
\(\bar\tau_i=1\) for all \(i\).

\begin{figure*}[t]
  \centering
  \begin{subfigure}{0.33\textwidth}
    \centering
    \includegraphics[width=\linewidth]{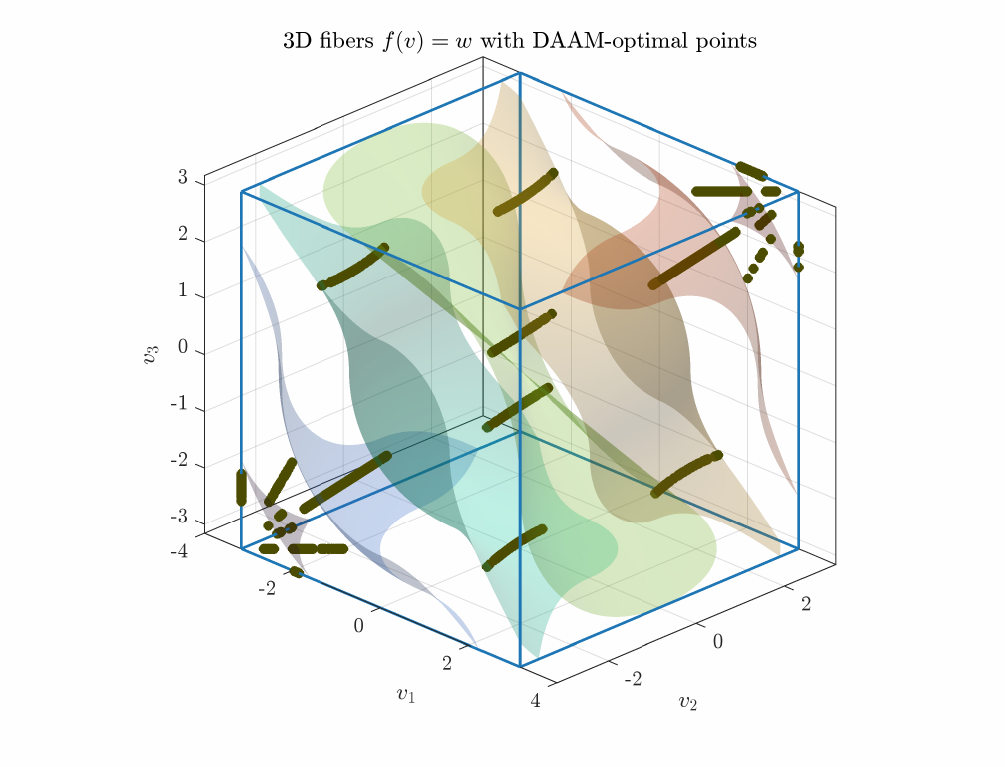}
    \subcaption{View 1}
    \label{fig:case3D_v1}
  \end{subfigure}\hfill
  \begin{subfigure}{0.32\textwidth}
    \centering
    \includegraphics[width=\linewidth]{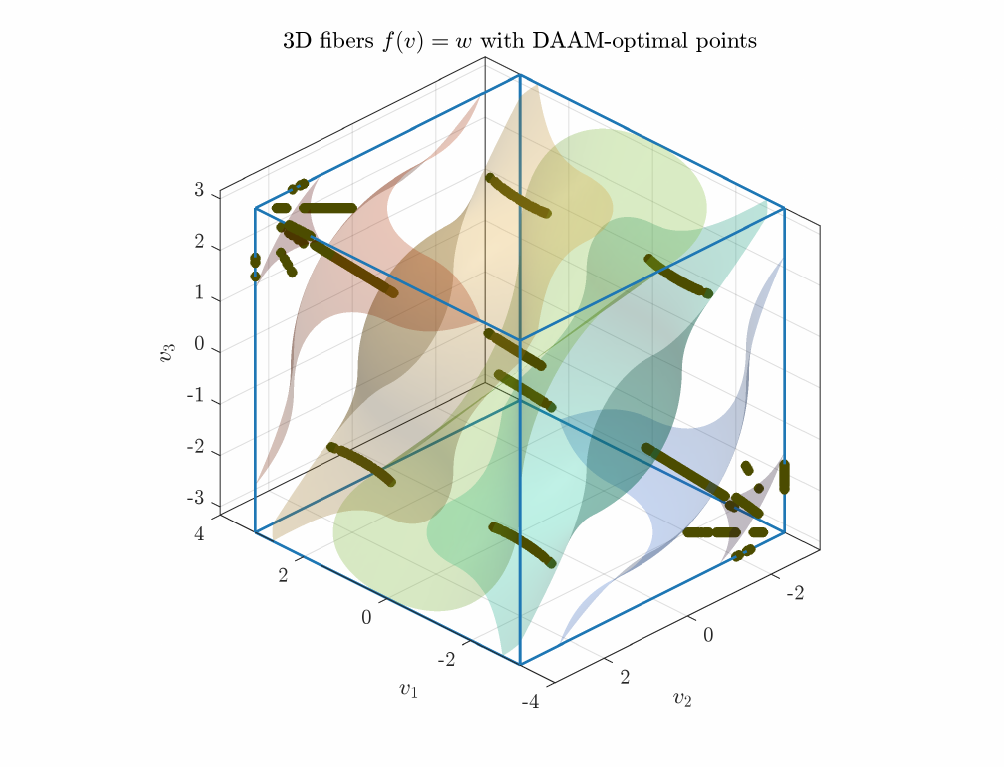}
    \subcaption{View 2}
    \label{fig:case3D_v3}
  \end{subfigure}\hfill
  \begin{subfigure}{0.32\textwidth}
    \centering
    \includegraphics[width=\linewidth]{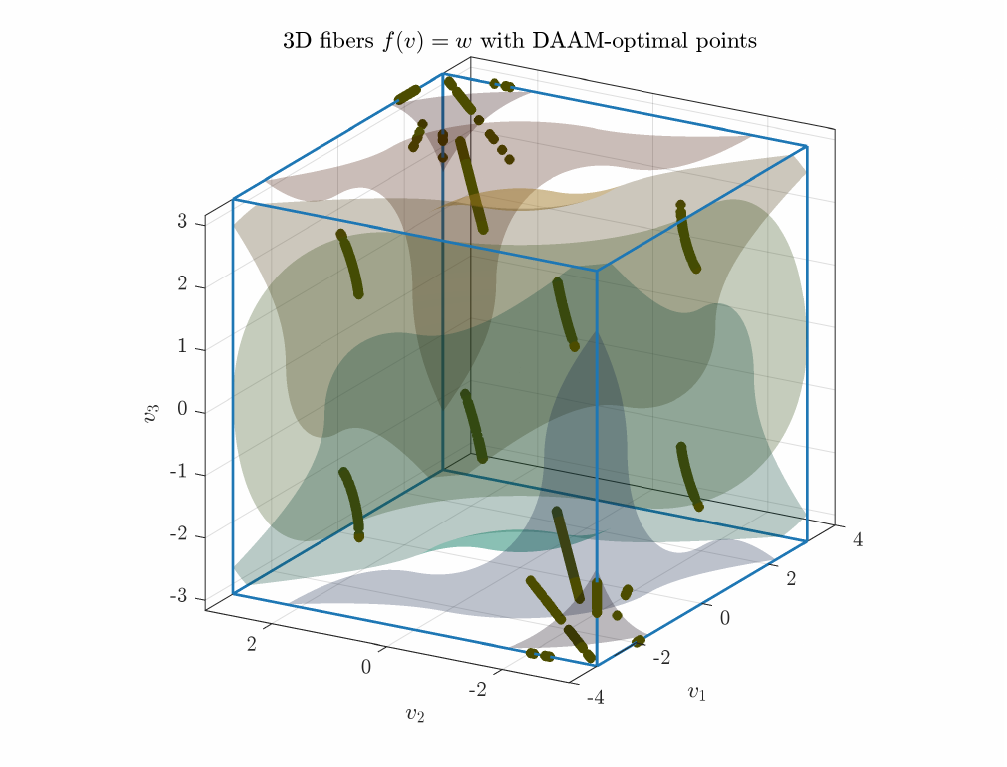}
    \subcaption{View 3}
    \label{fig:case3D_v4}
  \end{subfigure}
  \caption{Three-rotor, one-task allocation fibers and numerically computed
  fiberwise DAAM maximizers shown from three viewpoints. The translucent
  surfaces represent fibers associated with selected scalar task values,
  while the yellow markers identify maximizing candidates on those surfaces.
  The wireframe box indicates the compact operating domain.}
  \label{fig:three-one}
\end{figure*}

The third rotor increases the fiber dimension from one to two. Each
translucent surface in Fig.~\ref{fig:three-one} contains actuator states
generating one selected scalar task value, and the yellow markers identify
the computed DAAM maximizers on that surface. The candidates occur in several
regions of the actuator space rather than along a single branch.

This example makes the additional task-preserving freedom visible. DAAM must
now be compared over an entire surface, allowing redistribution among the
three rotor contributions along two independent tangent directions. The
definition of the maximizing correspondence \(\mathcal P_w\) remains
unchanged, although its computation and spatial structure become
higher-dimensional.

\subsection{Three Rotors and Two Tasks}
\label{subsec:three-two}

For \(n=3\) and \(m=2\), every regular allocation fiber is locally one
dimensional. The task-rate capability ellipsoid is two dimensional, and the
DAAM index \(\sqrt{\det M(v)}\) is proportional to its area. Fiberwise DAAM
maximization therefore selects the task-equivalent actuator states providing
the largest local capability area.

\begin{figure*}[t]
  \centering
  \begin{subfigure}{0.32\textwidth}
    \centering
    \includegraphics[width=\linewidth]{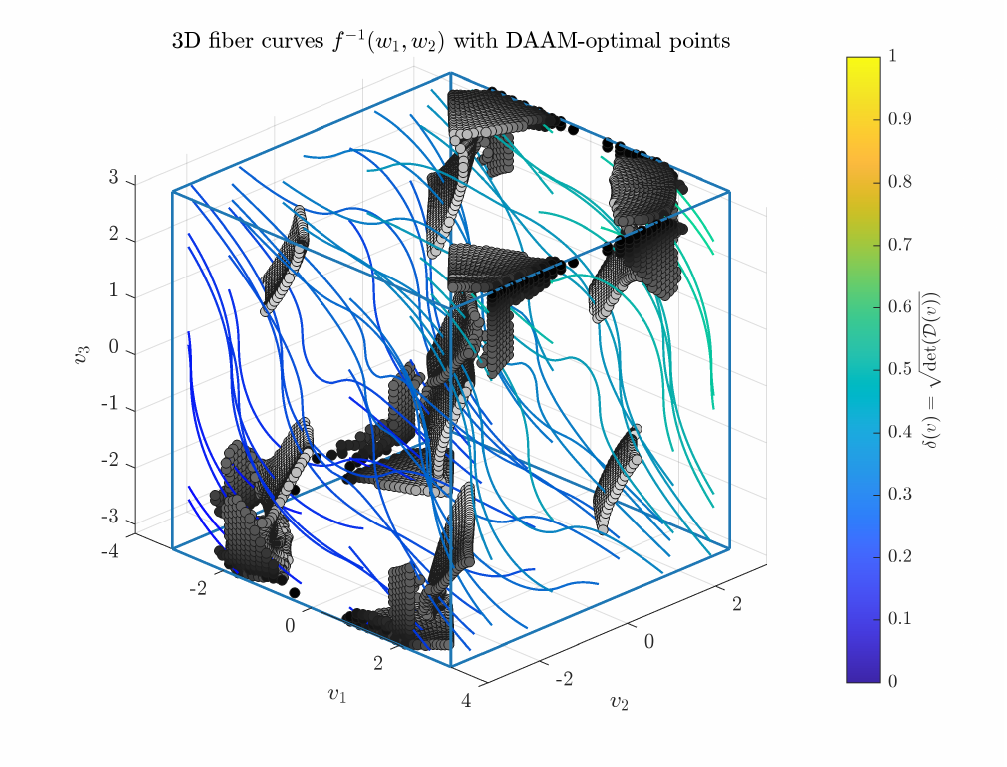}
    \subcaption{View 1}
    \label{fig:case3D2_v1}
  \end{subfigure}\hfill
  \begin{subfigure}{0.32\textwidth}
    \centering
    \includegraphics[width=\linewidth]{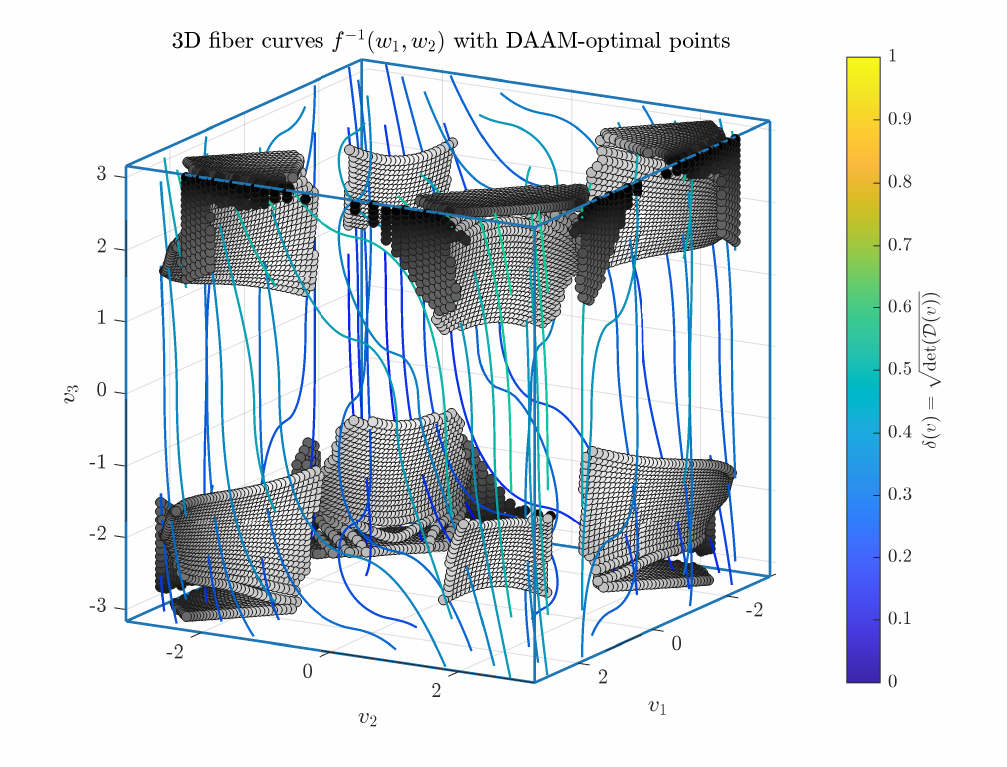}
    \subcaption{View 2}
    \label{fig:case3D2_v2}
  \end{subfigure}\hfill
  \begin{subfigure}{0.32\textwidth}
    \centering
    \includegraphics[width=\linewidth]{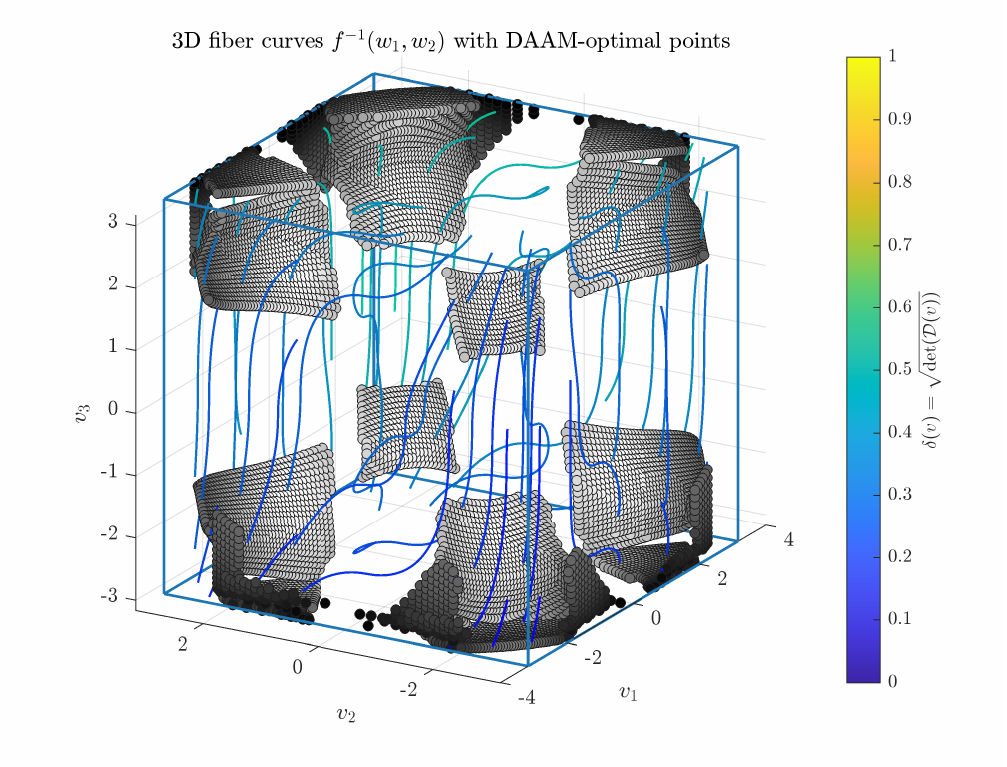}
    \subcaption{View 3}
    \label{fig:case3D2_v4}
  \end{subfigure}
  \caption{Three-rotor, two-task allocation fibers and numerically computed
  fiberwise DAAM maximizers shown from three viewpoints. Colored curves
  represent fibers associated with sampled task wrenches, while gray markers
  identify their maximizing candidates. Collectively, the candidates form
  multiple separated pieces in the actuator space.}
  \label{fig:three-two}
\end{figure*}

As the two-dimensional task wrench varies, the corresponding one-dimensional
fibers sweep through the three-dimensional actuator space. The computed
maximizers collected over the sampled task region form multiple curved and
separated pieces rather than a single smooth section. This global structure
can therefore be multipiece even though every individual regular fiber is
locally one dimensional.

\subsection{Cross-Case Synthesis}

The examples show how the fiberwise DAAM selection changes with the rotor
parameters and with the dimensions of the actuator and task spaces. The
two-rotor study exposes its parameter dependence in the plane. Adding a third
rotor turns each scalar-task fiber into a surface with two task-preserving
internal directions. Increasing the task dimension to two restores
one-dimensional fibers, while their maximizers, collected across the sampled
task region, form a multidimensional collection in actuator space.

In all cases, the selected states depend jointly on the signed-quadratic
actuation differential and the SAC. The computations exhibit separated
maximizing pieces, possible multiplicity, and encounters with the selected
operating-domain boundary. These features support the set-valued formulation
of \(\mathcal P_w\) and motivate the explicit treatment of continuity and
actuator dynamics when constructing an executable allocation section.

\section{Low-Dimensional Use Case: Continuous DAAM Allocation and Saturation-Limited Force Tracking} \label{sec:continuous-allocation}

The fiberwise correspondence in
\eqref{eq:maximizer-correspondence} identifies actuator states with maximum
local task-rate capability, but independent maximization on each fiber need
not define a continuous allocation section. This section presents a
low-dimensional use case showing how the DAAM field and the SAC-induced
geometry can inform the construction of such a section. The two-rotor scalar
task makes the allocation fibers and competing sections directly visible,
while simulation of the saturated rotor dynamics isolates their effect on
force tracking.

\subsection{Continuous Torque-Feasible DAAM Section}
\label{subsec:continuous-daam-section}

Consider two cooperative rotors operating at \(v_1,v_2>0\) and generating the
scalar force
\begin{equation}
 w=f(v)=A_1v_1^2+A_2v_2^2.
 \label{eq:cooperative-two-rotor-map}
\end{equation}
For a compact force interval
\([w_{\mathrm L},w_{\mathrm U}]\) contained in the regular
positive-capacity region, an allocation section is a continuously
differentiable map
\(s:[w_{\mathrm L},w_{\mathrm U}]\to\mathbb R_{>0}^2\)
satisfying \(f(s(w))=w\). Along the lifted trajectory
\(v(t)=s(w(t))\), exact section following requires
\begin{equation}
 \tau_i
 =
 b_is_i(w)^2+m_is_i'(w)\dot w.
 \label{eq:section-following-torque}
\end{equation}
This expression uses the complete directional rotor dynamics and therefore
retains the asymmetric effect of aerodynamic drag.

Given a symmetric force-rate envelope
\(|\dot w|\leq\dot w_{\mathrm d}\) and a torque-use factor
\(\eta\in(0,1]\), a continuous DAAM section is computed from
\begin{equation}
 \begin{aligned}
 \max_{s\in C^1}\quad
 &\int_{w_{\mathrm L}}^{w_{\mathrm U}}\ell(s(w))\,\dd w\\
 \text{subject to}\quad
 &f(s(w))=w,\\
 &-\eta\bar\tau_i
 \leq b_is_i(w)^2+m_is_i'(w)q
 \leq\eta\bar\tau_i,\\
 &q\in\{-\dot w_{\mathrm d},\dot w_{\mathrm d}\},
 \quad i\in\{1,2\}.
 \end{aligned}
 \label{eq:continuous-daam-section}
\end{equation}
The objective favors sections traversing regions of high DAAM, while the
constraints ensure torque feasibility at both extrema of the prescribed
force-rate interval. The factor \(\eta\) reserves the remaining fraction of
the torque range for tracking correction.

The fiber constraint is satisfied identically by parameterizing the section
as
\begin{equation}
 s_1(w)=\sqrt{\frac{w}{A_1}}\cos\theta(w),
 \quad
 s_2(w)=\sqrt{\frac{w}{A_2}}\sin\theta(w).
 \label{eq:angular-section-parameterization}
\end{equation}
In the numerical implementation, \(\theta(w)\) is represented by a
shape-preserving piecewise-cubic interpolant. Its control values are optimized
while enforcing the inequalities in
\eqref{eq:continuous-daam-section} on a dense force grid.

Two comparison sections are considered. Independent maximization of DAAM on
each fiber provides the pointwise capability reference, although its
maximizing candidates need not form a continuous section. The unweighted
pseudoinverse is defined in the individual-force variables
\(u_i=A_iv_i^2\). Since \(u_1+u_2=w\), its minimum-Euclidean-norm solution is
\(u_1=u_2=w/2\), which gives
\begin{equation}
 s_{\mathrm{PI},i}(w)
 =
 \sqrt{\frac{w}{2A_i}}.
 \label{eq:pseudoinverse-section}
\end{equation}
This section depends on the static coefficients \(A_i\), but not on the
inertias, drag coefficients, or torque limits. It consequently assigns the
same individual forces to rotors having equal static effectiveness even when
their dynamic capabilities differ.

\subsection{Saturated Rotor-Dynamics Benchmark}
\label{subsec:tracking-benchmark}

Each allocation section \(s_j\) is followed using the same rotor-speed
controller,
\begin{equation}
 \tau_{i,\mathrm{cmd}}
 =
 b_iv_i^2
 +
 m_i\left[
 s_{j,i}'(w_d)\dot w_d
 +
 k_{v,i}\bigl(s_{j,i}(w_d)-v_i\bigr)
 \right],
 \label{eq:rotor-speed-controller}
\end{equation}
followed by the componentwise saturation
\[
 \tau_i
 =
 \operatorname{sat}_{[-\bar\tau_i,\bar\tau_i]}
 \bigl(\tau_{i,\mathrm{cmd}}\bigr).
\]
The first term in \eqref{eq:rotor-speed-controller} compensates aerodynamic
drag, the second follows the section tangent, and the third corrects the
rotor-speed error. The realized force is \(w=f(v)\).

The desired force is a reproducible band-limited multisine,
\begin{equation}
 w_d(t)
 =
 w_0
 +
 \sum_{r=1}^{N_h}
 c_r\sin(2\pi f_rt+\varphi_r),
 \label{eq:multisine-command}
\end{equation}
scaled to remain inside the section domain. Four frequency bands are used:
\(0.05\)--\(0.20\) Hz, \(0.20\)--\(0.50\) Hz,
\(0.50\)--\(1.00\) Hz, and \(1.00\)--\(1.80\) Hz. Each band comprises eight
seeded realizations with eight harmonic components. All simulations use the
same initial task value, controller gains, command realizations, torque
limits, and \(2\) ms integration step. Each simulation lasts \(15\) s, and
the first \(2\) s are excluded from the performance metrics.

Tracking performance is measured by the RMS force error normalized by the
standard deviation of the desired force. Torque saturation is measured by the
fraction of retained samples at which at least one unsaturated torque command
lies outside its admissible interval.

\subsection{Case I: Equal Static Coefficients and Unequal Propulsion Dynamics}
\label{subsec:dynamic-asymmetry-case}

The first case uses
\(
 A=(1,1),\;
 b=(0.2,0.1),\;
 m=(0.05,0.05),\;
 \bar\tau=(0.6,1).
\)
The physical cooperative force range is \([0,13]\), while the section is
constructed on \([0.39,11.05]\) using \(\eta=0.9\) and
\(\dot w_{\mathrm d}=0.15\). The equal static coefficients make this case
particularly informative: the pseudoinverse assigns equal individual forces
even though the first rotor has both larger drag and a smaller torque limit.

\begin{figure*}[t]
 \centering
 \begin{subfigure}{0.4\textwidth}
  \centering
  \includegraphics[width=\linewidth]
  {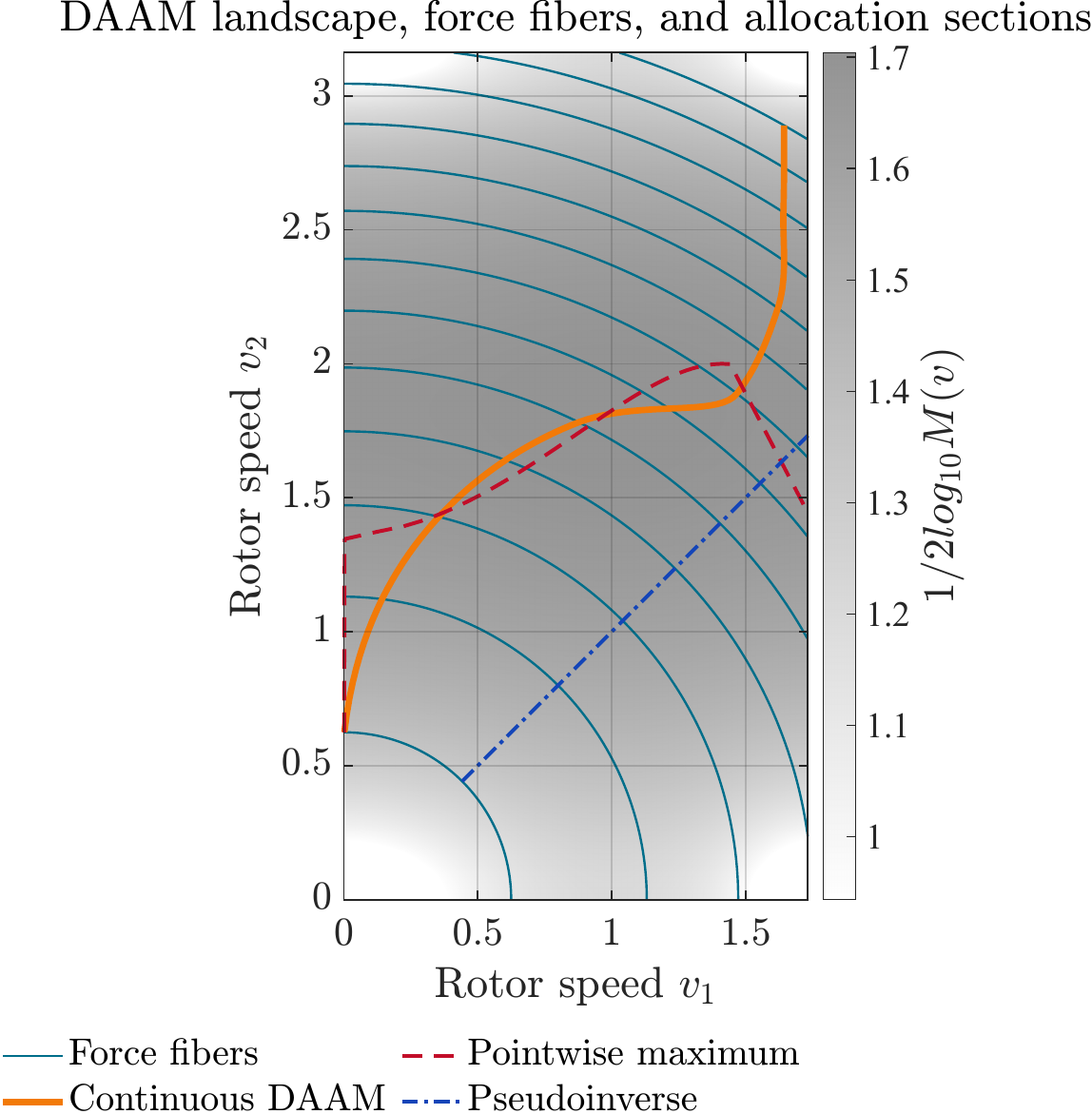}
  \subcaption{DAAM landscape, fibers, and allocation sections.}
  \label{fig:daam-case1-allocation}
 \end{subfigure}\hfill
 \begin{subfigure}{0.58\textwidth}
  \centering
  \includegraphics[width=\linewidth]
  {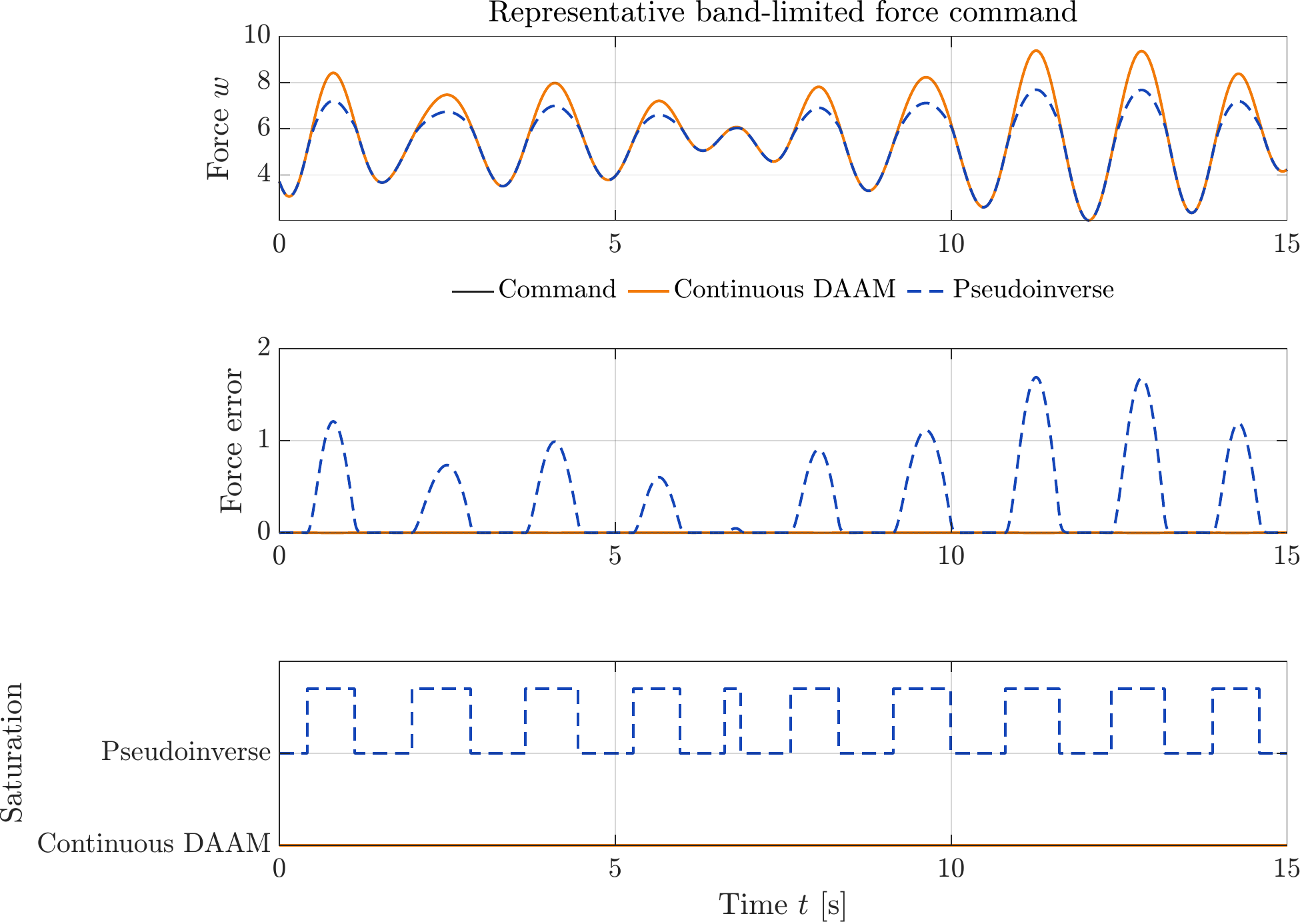}
  \subcaption{Representative force-tracking realization.}
  \label{fig:daam-case1-tracking}
 \end{subfigure}
 \caption{Allocation and saturated force tracking for Case~I. The pointwise
 DAAM maximizers provide a capability reference, while the continuous DAAM
 and pseudoinverse sections are followed dynamically.}
 \label{fig:daam-case1-main}
\end{figure*}

Figure~\ref{fig:daam-case1-allocation} shows that the optimized section
traverses the high-DAAM region while remaining continuous and
torque-feasible. It departs from the pointwise maximizing candidates where
following them would be incompatible with the section constraints. The mean
log-DAAM over the construction interval is \(3.7231\) for the continuous
DAAM section, \(3.7105\) for the pseudoinverse, and \(3.7787\) for the
pointwise reference. The maximum nominal torque utilization reaches the
prescribed value \(\eta=0.9\).

In the representative realization of
Fig.~\ref{fig:daam-case1-tracking}, the two executable sections track
similarly while the required torque remains admissible. Their responses
separate when the faster force variations activate the torque limits. The
same frequency dependence appears in the aggregate results of
Fig.~\ref{fig:daam-case1-bands}: differences are limited in the lowest band
and become more pronounced as the command bandwidth increases. In the
very-high-frequency band, the continuous DAAM section yields both a lower
normalized RMS force error and a smaller saturation fraction than the
pseudoinverse.

\begin{figure*}[t]
 \centering
 \begin{subfigure}{0.49\textwidth}
  \centering
  \includegraphics[width=\linewidth]
  {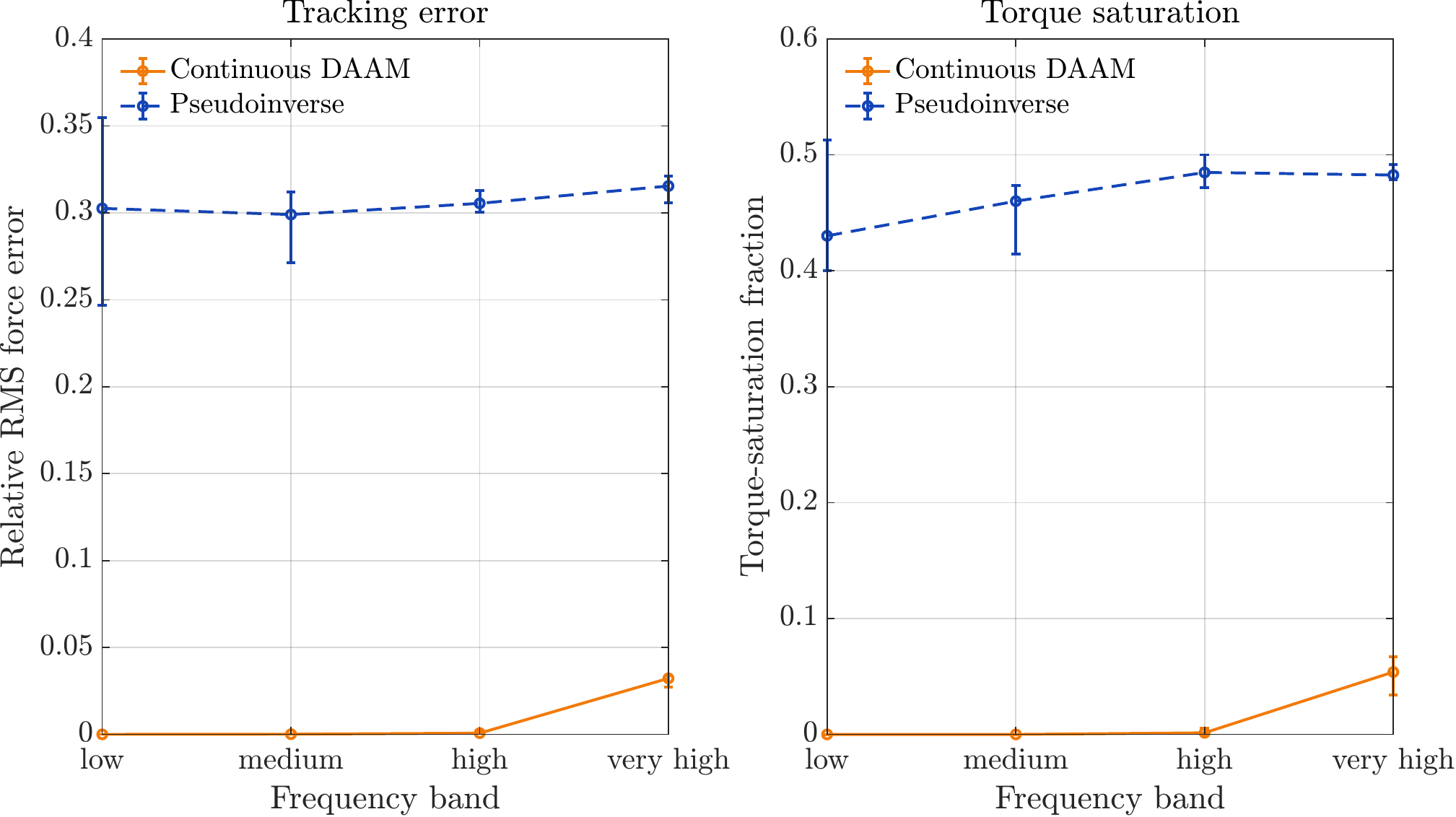}
  \subcaption{Tracking error and torque saturation by frequency band.}
  \label{fig:daam-case1-bands}
 \end{subfigure}\hfill
 \begin{subfigure}{0.49\textwidth}
  \centering
  \includegraphics[width=\linewidth]
  {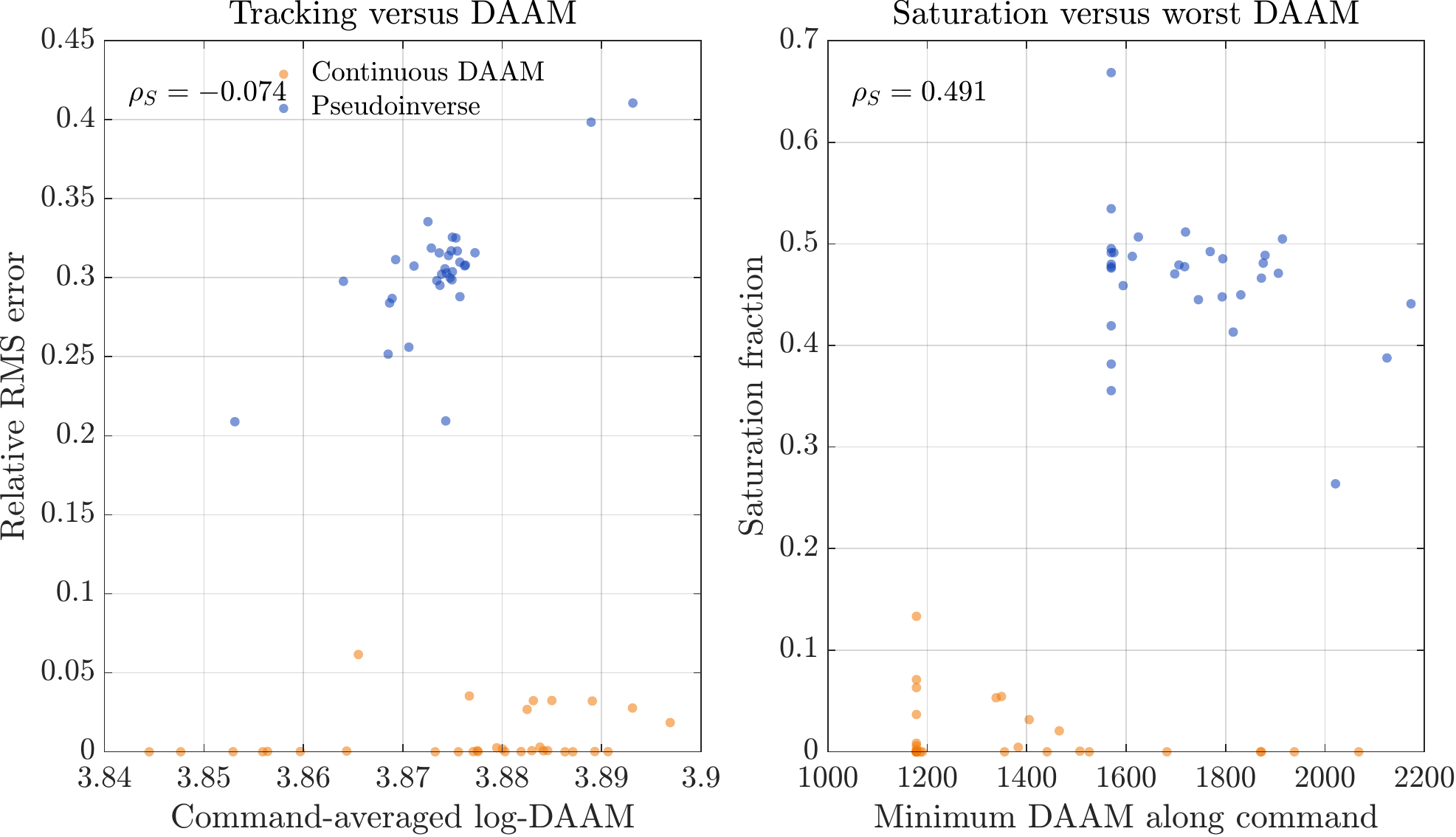}
  \subcaption{Pooled DAAM-performance relations.}
  \label{fig:daam-case1-correlation}
 \end{subfigure}
 \caption{Aggregate benchmark results for Case~I. The frequency-band
 comparison preserves the pairing between allocation sections and command
 realizations. Each point in the right panels represents one
 allocation--command pair.}
 \label{fig:daam-case1-aggregate}
\end{figure*}

The pooled Spearman coefficients in
Fig.~\ref{fig:daam-case1-correlation} are \(-0.074\) between
command-averaged log-DAAM and normalized RMS error and \(0.491\) between the
minimum DAAM encountered along a command and the saturation fraction. Because
these pooled data combine different allocations, frequency bands, and command
realizations, the coefficients are descriptive rather than direct measures of
the allocation effect. The paired comparison within each frequency band
provides the relevant performance assessment.

\subsection{Case II: Unequal Static Coefficients and Torque Limits}
\label{subsec:mixed-asymmetry-case}

The second case uses
\(
 A=(1,0.5),\;
 b=(0.1,0.1),\;
 m=(0.05,0.05),\;
 \bar\tau=(1,0.7).
\)
The physical cooperative force range is \([0,13.5]\), and the section is
constructed on \([0.405,11.47]\). Here, both the static actuation coefficients
and the torque limits are asymmetric. The pseudoinverse accounts for the
former through \eqref{eq:pseudoinverse-section}, but remains independent of
the latter.

\begin{figure*}[t]
 \centering
 \begin{subfigure}{0.43\textwidth}
  \centering
  \includegraphics[width=\linewidth]
  {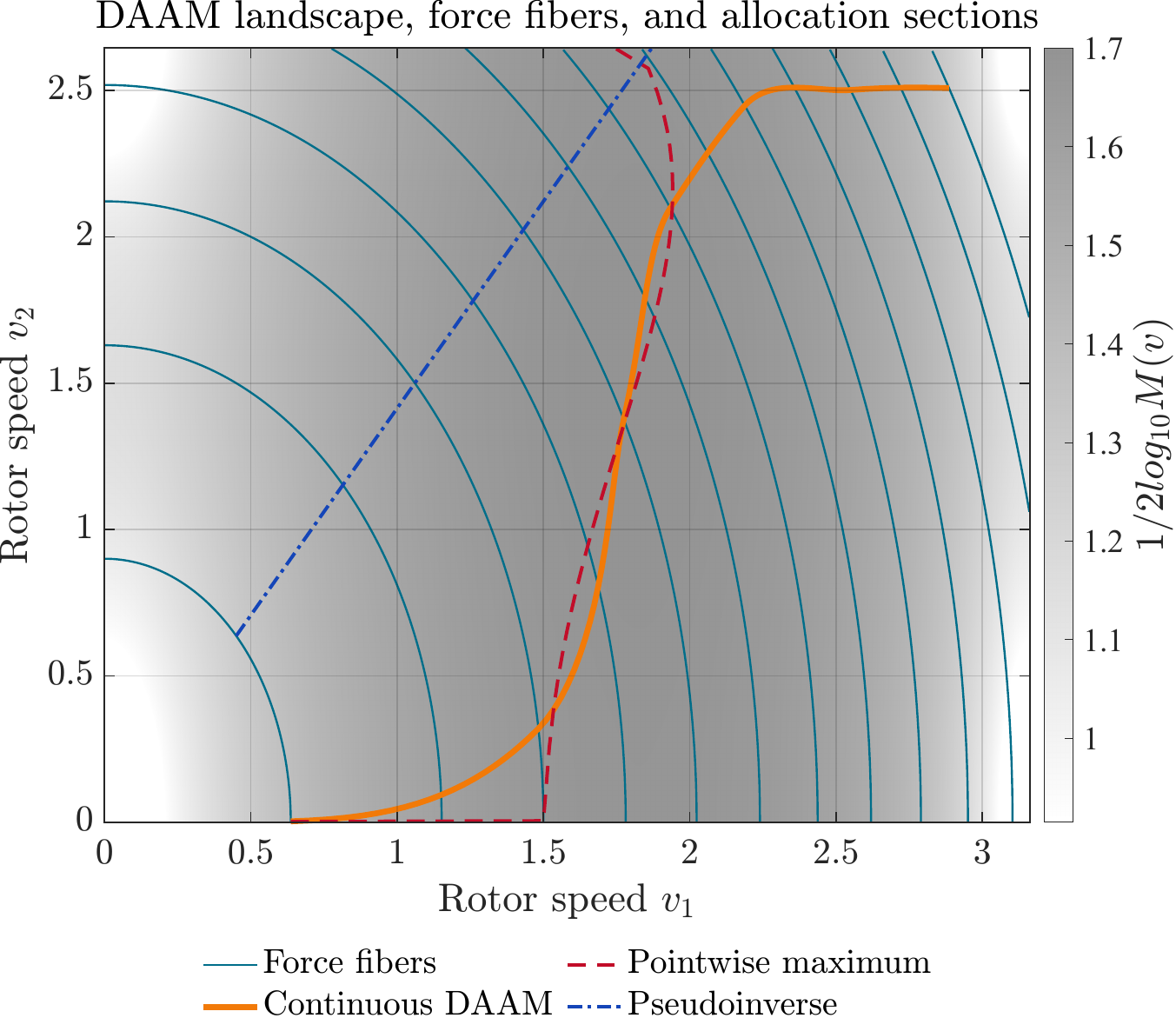}
  \subcaption{DAAM landscape, fibers, and allocation sections.}
  \label{fig:daam-case2-allocation}
 \end{subfigure}\hfill
 \begin{subfigure}{0.54\textwidth}
  \centering
  \includegraphics[width=\linewidth]
  {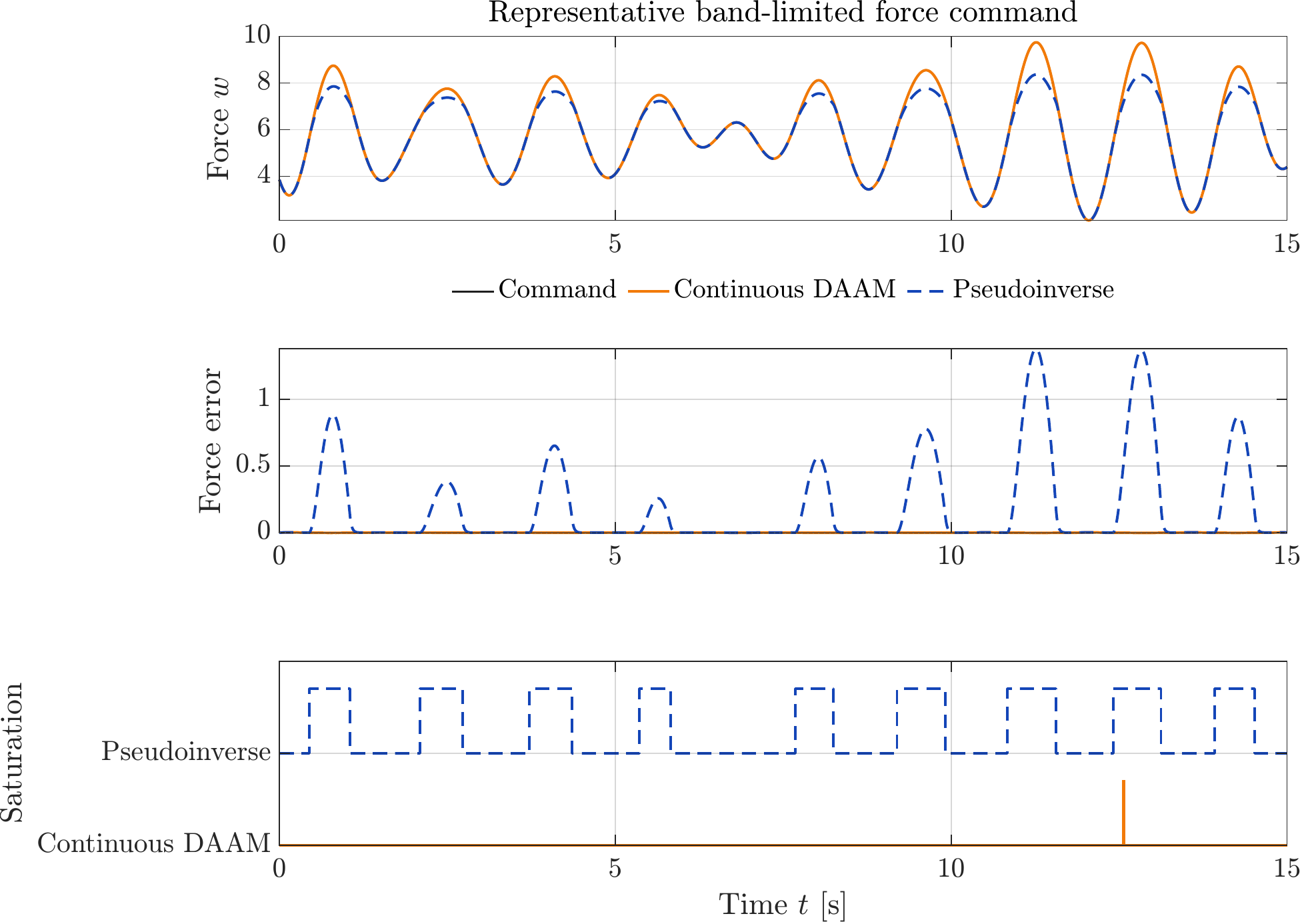}
  \subcaption{Representative force-tracking realization.}
  \label{fig:daam-case2-tracking}
 \end{subfigure}
 \caption{Allocation and saturated force tracking for Case~II. The unequal
 static coefficients alter both the allocation fibers and the pseudoinverse
 section, while the unequal torque limits affect the available acceleration
 authority.}
 \label{fig:daam-case2-main}
\end{figure*}

The continuous section in Fig.~\ref{fig:daam-case2-allocation} again follows
a favorable region of the DAAM landscape while satisfying the section-level
constraints. The pseudoinverse reflects the unequal static coefficients but
does not account for the smaller torque limit of the second rotor. The
representative response and aggregate results exhibit the same qualitative
dependence on command bandwidth as Case~I. The sections perform similarly for
slow commands, whereas saturation produces a clearer separation at higher
frequencies. In the high and very-high bands, the continuous DAAM section
exhibits lower aggregate tracking error and a smaller saturation fraction.

\begin{figure*}[t]
 \centering
 \begin{subfigure}{0.49\textwidth}
  \centering
  \includegraphics[width=\linewidth]
  {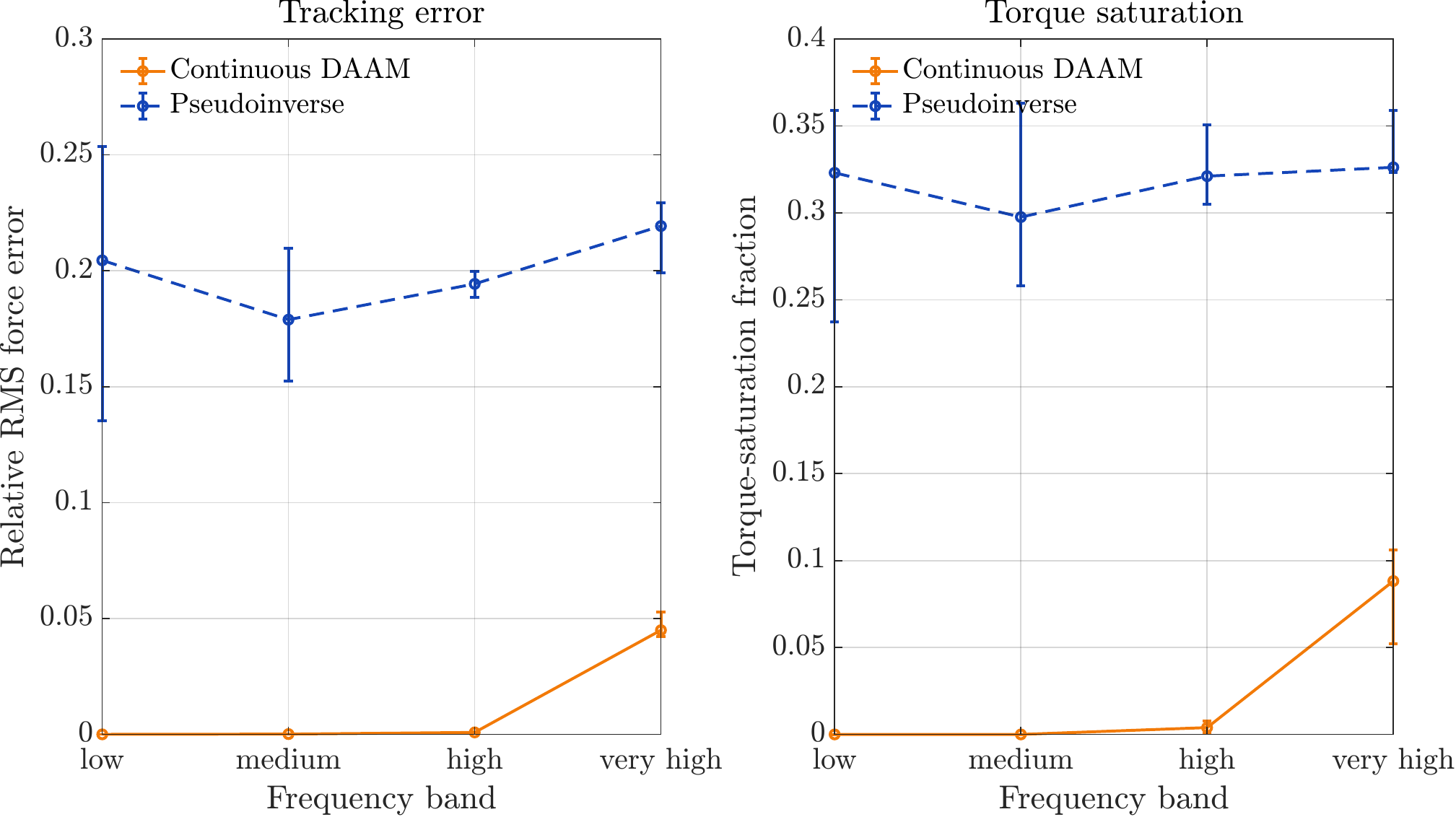}
  \subcaption{Tracking error and torque saturation by frequency band.}
  \label{fig:daam-case2-bands}
 \end{subfigure}\hfill
 \begin{subfigure}{0.49\textwidth}
  \centering
  \includegraphics[width=\linewidth]
  {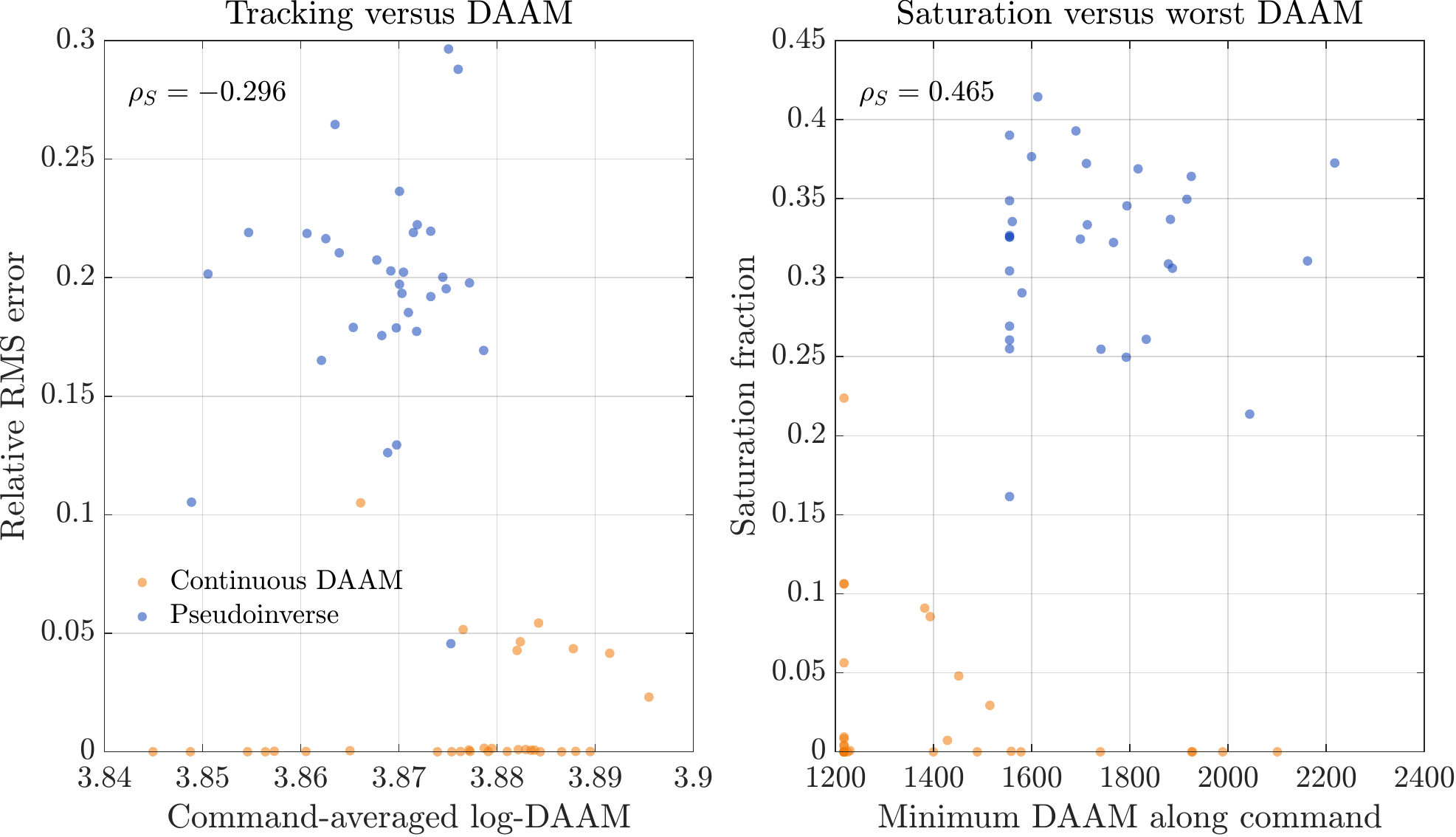}
  \subcaption{Pooled DAAM-performance relations.}
  \label{fig:daam-case2-correlation}
 \end{subfigure}
 \caption{Aggregate benchmark results for Case~II. The frequency-band
 comparison preserves the pairing between allocation sections and command
 realizations, while the right panels pool the complete benchmark data.}
 \label{fig:daam-case2-aggregate}
\end{figure*}

The pooled Spearman coefficients in
Fig.~\ref{fig:daam-case2-correlation} are \(-0.296\) between
command-averaged log-DAAM and normalized RMS error and \(0.465\) between the
minimum encountered DAAM and the saturation fraction. As in Case~I, these
coefficients combine the effects of allocation, bandwidth, and command
realization. The moderate correlations also reflect that dynamic section
following depends on \(s_i'(w)\dot w\), which contributes directly to the
required torque in \eqref{eq:section-following-torque}.

\subsection{Cross-Case Assessment}
\label{subsec:dynamic-cross-case}

The two cases isolate different forms of propulsion heterogeneity. Case~I
shows that a static pseudoinverse cannot distinguish rotors having equal
force coefficients but different drag and torque capabilities. Case~II shows
that accounting for unequal static coefficients is still insufficient when
the actuator limits differ.

The DAAM field alone does not define a dynamically feasible allocation
section. Continuity and the torque required to move along the section must
also be considered. Problem \eqref{eq:continuous-daam-section} provides one
low-dimensional construction combining these ingredients: the objective
favors local task-rate capability, while the constraints limit the torque
required over the prescribed force-rate envelope.

In both cases, the allocation-dependent performance difference emerges
principally when the command bandwidth activates the torque limits. This
behavior is consistent with DAAM representing local wrench-rate capability
rather than a steady-state allocation cost. The simulations therefore
demonstrate one use of the general SAC and DAAM framework for constructing and
assessing an executable allocation section; they do not define a general
allocator for arbitrary multirotor systems.

\section{Discussion and Limitations}
\label{sec:discussion}

The SAC is the radius of the largest zero-centered subset of the generally
asymmetric rotor-acceleration interval in
\eqref{eq:directional-interval}. The induced metric therefore represents the
acceleration authority guaranteed in both directions while omitting the
additional authority available in the more favorable direction. This
difference is particularly relevant at high rotor speeds, where increasing
\(|v_i|\) can be substantially more difficult than reducing it.

The bounded-torque rotor model is idealized. Current and voltage limits,
braking policies, battery state, thermal effects, and embedded controller
dynamics can modify the attainable acceleration set. The SAC derived in this
work is exact for \eqref{eq:rotor-dynamics}, while the subsequent geometry can
accommodate any positive centered capacity identified or certified from a
more detailed propulsion model.

The task-rate capability matrix \(M(v)\) is instantaneous and state attached,
and the DAAM index measures the volume of its capability ellipsoid. For a
scalar task, DAAM fully characterizes the radius of the bidirectional
task-rate interval. For a multidimensional task, a larger volume does not
imply greater capability in every direction; direction-specific,
minimum-radius, or task-weighted criteria may then be preferable.

Fiberwise DAAM maximization generally defines a set-valued correspondence
rather than a unique smooth allocation section. The canonical illustrations
expose multiple separated maximizing pieces and encounters with the selected
operating-domain boundary. Constructing an executable allocator therefore
also requires continuity, branch selection, and the actuator motion needed
to follow the selected states.

The illustrative construction in
Sec.~\ref{sec:continuous-allocation} addresses these requirements for a
two-rotor scalar task. The section is computed offline and depends on the
force interval, force-rate envelope, torque-use factor, and numerical
parameterization. Larger DAAM alone does not guarantee smaller tracking error,
because \(s_i'(w)\dot w\) also contributes to the required motor torque.
Nevertheless, in the two heterogeneous systems considered, the continuous
DAAM sections reduce saturation-induced force-tracking degradation relative
to the pseudoinverse in the faster command bands. These simulations provide
an allocation-level use case rather than a general allocator or evidence of
improved vehicle-level closed-loop performance.

Future work will address high-fidelity and asymmetric actuator-capability
models, multidimensional and direction-dependent objectives, online and
branch-aware allocation, comparisons with energy- and saturation-aware
methods, and vehicle-level experimental validation.

\section{Conclusion}
\label{sec:conclusion}

This work developed a capacity-aware extension of aerodynamic promptness for
heterogeneous redundantly actuated multirotors. The largest zero-centered
subset of each generally asymmetric rotor-acceleration interval defines the
SAC and induces a state-dependent actuator metric on the positive-capacity
region.

Propagating the corresponding co-metric through the nonlinear actuation
differential yields a state-attached task-rate capability matrix and
ellipsoid. Its inverse quadratic form gives the minimum normalized
rotor-acceleration effort required to generate a prescribed wrench rate,
while its volume defines the DAAM index.

Fiberwise DAAM maximization provides a task-coordinate-invariant criterion
for comparing task-equivalent actuator states. Existence, interior
optimality conditions, and the canonical illustrations show that the
resulting maximizing correspondence can be nonconvex, set valued, and
globally multipiece.

Low-dimensional examples made this geometry directly visible, while a
two-rotor scalar-task use case showed how DAAM can inform a continuous section
subject to directional motor-torque feasibility. In two heterogeneous
propulsion systems, the resulting sections reduced saturation-induced
force-tracking degradation relative to the pseudoinverse in the faster
command bands.

The general SAC, task-rate geometry, DAAM index, and fiberwise selection
theory apply to arbitrary numbers of rotors and wrench components through the
fixed actuation map and individual rotor parameters. They provide a
foundation for capability-aware allocators tailored to specific multirotor
architectures, task spaces, operating domains, and actuator constraints.

\section*{Statements and Declarations}

\subsection*{Funding}
This work was partially funded by the European Commission Horizon Europe Framework under project Autoassess (101120732).

\subsection*{Data and Code Availability}
The data generated in this work are obtained from the numerical models and
parameters reported in the manuscript.

\bibliographystyle{IEEEtran}
\bibliography{refs}

\end{document}